\newcommand{\R}{\mathbb R}
\newcommand\extrafootertext[1]{%
    \bgroup
    \renewcommand\thefootnote{\fnsymbol{footnote}}%
    \renewcommand\thempfootnote{\fnsymbol{mpfootnote}}%
    \footnotetext[0]{#1}%
    \egroup
}
\newtheorem{definition}{Definition}
\newtheorem{lemma}{Lemma}
\newtheorem{theorem}{Theorem}
\newtheorem{proposition}{Proposition}
\icmltitlerunning{Layer by Layer: Uncovering Hidden Representations in Language Models}
\begin{document}
\twocolumn[
\icmltitle{Layer by Layer: Uncovering Hidden Representations in Language Models}
\icmlsetsymbol{equal}{*}

\begin{icmlauthorlist}
\icmlauthor{Oscar Skean}{uk}
\icmlauthor{Md Rifat Arefin}{mila,udem}
\icmlauthor{Dan Zhao}{nyu}
\icmlauthor{Niket Patel}{ucla}
\icmlauthor{Jalal Naghiyev}{ind}\\
\icmlauthor{Yann LeCun}{nyu,meta}
\icmlauthor{Ravid Shwartz-Ziv}{nyu,wand}
\end{icmlauthorlist}

\icmlaffiliation{uk}{University of Kentucky}
\icmlaffiliation{mila}{Mila-Quebec AI Institute}
\icmlaffiliation{udem}{University of Montreal}
\icmlaffiliation{nyu}{New York University}
\icmlaffiliation{ucla}{University of California, Los Angeles} 
\icmlaffiliation{meta}{Meta FAIR}
\icmlaffiliation{wand}{Wand.AI}
\icmlaffiliation{ind}{Independent}

\icmlcorrespondingauthor{Oscar Skean}{oscar.skean@uky.edu}

\icmlkeywords{Machine Learning, ICML}

\vskip 0.3in
]

\printAffiliationsAndNotice{}

\begin{abstract}


From extracting features to generating text, the outputs of large language models (LLMs) typically rely on the final layers, following the conventional wisdom that earlier layers capture only low-level cues. However, our analysis shows that \emph{intermediate layers} can encode even richer representations, often improving performance on a range of downstream tasks. To explain and quantify these hidden-layer properties, we propose a unified framework of representation quality metrics based on information theory, geometry, and invariance to input perturbations. Our framework highlights how each layer balances information compression and signal preservation, revealing \emph{why} mid-depth embeddings can exceed the last layer’s performance. Through extensive experiments on 32 text-embedding tasks across various architectures (transformers, state-space models) and domains (language, vision), we demonstrate that intermediate layers consistently provide stronger features, challenging the standard view on final-layer embeddings and opening new directions on using mid-layer representations for more robust and accurate representations.

\end{abstract}

\section{Introduction}
\label{sec:intro}

\begin{figure}[!t]
\centering
\includegraphics[width=0.8\linewidth]{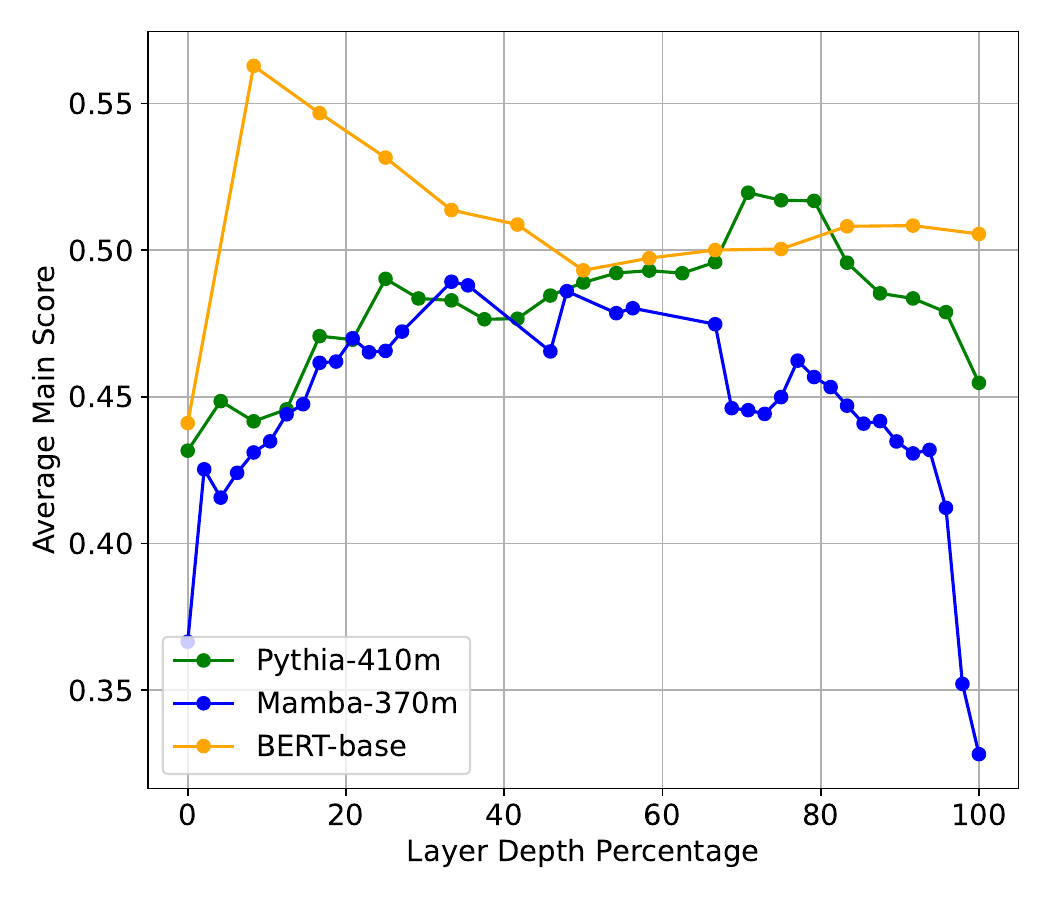}
\caption{\textbf{Intermediate layers consistently outperform final layers on downstream tasks.} The average score of 32 MTEB tasks using the outputs of every model layer as embeddings for three different model architectures. The x-axis is the depth percentage of the layer, rather than the layer number which varies across models.}
\label{fig:layerwise-main-scores}
\end{figure}




Large Language Models (LLMs) have driven remarkable progress in natural language processing (NLP), achieving state-of-the-art results on many tasks \citep{gpt3, devlin2018bert, alphacode}. At the heart of most applications lies a common assumption: \emph{final-layer representations} are the most useful for downstream tasks. Yet a fundamental question remains: \emph{does the final layer always yield the best representation?}


In this paper, we conduct a \emph{layer-wise} analysis of LLMs across diverse architectures—including transformer-based ones \citep{vaswani2017attention}, state-space models (SSMs) \citep{mamba}, and encoder-based models like BERT \citep{devlin2018bert}—spanning parameter scales from tens of millions to billions. Through systematic evaluation on 32 embedding tasks from the \textbf{Massive Text Embedding Benchmark (MTEB)} \citep{muennighoff2022mteb}, we find that \emph{intermediate layers} often surpass the final layer by up to 16\% in downstream accuracy. Figure~\ref{fig:layerwise-main-scores} illustrates this phenomenon, where mid-depth layers provide particularly strong representations while the very last layer can become overly specialized to the pretraining objective.



\noindent
\textbf{A unified framework.}\quad To better understand intermediate layers' effectiveness, we combine three complementary perspectives (Section~\ref{sec:framework}):
\begin{itemize}[itemsep=1pt, topsep=0pt]
    \item \textbf{Information-theoretic:} How much do layers compress or preserve semantic information \citep{shwartz2017opening, shwartz2022information}?
    \item \textbf{Geometric:} How do token embeddings unfold in high-dimensional space~\citep{hosseini2024curvature})?
    \item \textbf{Invariance:} Are embeddings robust to input perturbations (e.g., InfoNCE \citep{oord2018representation}, LiDAR \citep{thilak2023lidar} and DiME~\cite{skean2023dime})?

\end{itemize}
We show that these perspectives can be viewed under a single lens, which clarifies how intermediate layers strike a balance between retaining features and discarding noise.

\noindent
\textbf{Key findings and contributions.}\quad
Our investigation leads to several important insights:
\begin{itemize}[itemsep=1pt, topsep=0pt]
    \item \emph{Intermediate layers consistently outperform final layers.} This pattern is evident in both transformers and SSMs, suggesting a broad architecture-agnostic effect.
    \item \emph{Autoregressive vs.\ masked-language training.} Autoregressive models exhibit a pronounced mid-layer “compression valley,” whereas masked or bidirectional models show milder intermediate changes.
    \item \emph{Domain-general effect.} We extend these results to vision models and find that autoregressive image transformers display the same mid-depth bottleneck, indicating that the \emph{training objective}, rather than the data modality, is the key driver.
    \item \emph{CoT finetuning.} Analyzing chain-of-thought (CoT) reveals that finetuning can reshape mid-layer entropy, preserving latent context for multi-step reasoning.
\end{itemize}

Overall, our results challenge the default reliance on final-layer embeddings and highlight intermediate layers as potentially underutilized sources of meaningful features. In this paper, we detail our unified framework (Section~\ref{sec:framework}), present extensive experiments in both language and vision (Section~\ref{sec:experiments}, \ref{subsec:extreme-inputs}, \ref{sec:vision}), and conclude with a discussion of our findings, their implications, and future directions.~\footnote{We make our code available at \url{https://github.com/OFSkean/information_flow}}

\section{Related Work}
\label{sec:related}

\paragraph{Understanding Neural Representations.}

A long line of research has aimed to understand \emph{how} deep neural networks encode and organize information. Early studies employed linear probes for intermediate layers \citep{alain2016understanding}, while subsequent efforts introduced more sophisticated techniques such as SVCCA \citep{raghu2017svcca} to compare learned features across architectures and training regimes. Although these approaches have shed light on representation dynamics, most focus on vision models or shallow networks.  In contrast, our work contributes to a growing body of literature extending layer-wise analysis to \emph{large-scale} language models, emphasizing specific behaviors of intermediate layers across diverse architectures. Complementing our empirical findings, \citet{saponati2025underlying} present a theoretical analysis of how different pretext tasks, such as next-token prediction and masked language modeling, influence the structure of learned representations.


\paragraph{Layer-wise Analysis in Language Models.}

Recent work has increasingly focused on identifying \emph{which} transformer layers encode different types of information. For example, linguistic features such as part-of-speech tags or semantic roles are best encoded by the middle layers of a BERT~\citep{liu2019linguistic, tenney2019bert, voita2019bottom}. More recent work  has shown that mid-depth layers sometimes hold surprisingly robust features, challenging the usual emphasis on final layer representations~\citep{jin2024conceptdepth, gurnee2023language, fan2024notalllayers}. A related line of work investigates the attention sink phenomenon~\citep{attention-sinks, identifiability, gu2024attention}, in which attention disproportionately concentrates on a single token. Notably, intermediate decoder layers have been shown to not exhibit these extreme attention sinks~\citep{first-token-attending}, suggesting they engage in more distributed and meaningful information processing than the shallow or deep layers.

\paragraph{Compression and Generalization.}

Multiple lines of research link compression and generalization performance~\cite{deletanglanguage}. For instance, \citet{bordes2022guillotine} demonstrated that discarding certain layers in self-supervised encoders can even \emph{improve} downstream accuracy, while \citet{park2024geometry} found that LLM embeddings often lie in low-dimensional manifolds. Our empirical study reinforces these ideas by demonstrating that many networks—especially autoregressive transformers—naturally develop a mid-layer bottleneck that appears crucial for balancing “signal” versus “noise.” We show how intermediate layers can achieve optimal trade-offs between preserving task-relevant information and discarding superfluous detail.

\paragraph{Representation Quality Metrics.}

A variety of metrics have been proposed to quantify the ``quality'' of learned representations. We group them into three main categories:

\begin{itemize}[itemsep=1pt, topsep=0pt]
    \item \textbf{Information-theoretic measures} capture how much a model's internal representations compress or preserve relevant information. For example, the Information Bottleneck \citep{shwartz2017opening, shwartz2022information} analyzes whether intermediate layers discard noise while retaining essential features. Intrinsic dimensionality, which describes the minimum number of features to represent data, has also been used to analyze intermediate layers in LLMs~\citep{doimo-abstraction-phase, doimo-hidden-representations, anisotropy}. This line of work has shown semantic abstractions useful for downstream tasks are better encoded in middle layers than last layers in large transformer models. While we do not study intrinsic dimensionality in our subsequent analysis, it would make a promising direction for future work.
    
    \item \textbf{Geometric measures} focus on the structure of embeddings in high-dimensional space. Classical approaches include analyzing singular values and effective rank of the representation matrix \citep{garrido2023rankme}. The anisotropy metric of \citet{anisotropy} has been used to study compression in intermediate model layers and we compare our results with their findings in Section~\ref{subsec:arch-scale-diffs}. Anisotropy fits in well with our proposed framework, though we leave a formal integration to future work. Recent work explores curvature \citep{hosseini2024curvature} to quantify how smoothly tokens are mapped across consecutive positions or time steps. 
    
    \item \textbf{Task-based or invariance metrics} evaluate how well representations support downstream goals. For instance, augmentations-based approaches such as InfoNCE \citep{oord2018representation} and LiDAR \citep{thilak2023lidar} estimate invariance to perturbations, while methods like NESum or Self-Cluster \citep{agrawal2022alphareq} link closely to entropy. In computer vision, these scores often correlate strongly with downstream accuracy, highlightFing how robust the embeddings are.
\end{itemize}


Although these representation quality metric categories may appear distinct, we show (Section~\ref{sec:framework}) that many can be unified under a single lens. This unification illuminates \emph{why} certain intermediate layers balance compression, geometry, and invariance so effectively, leading to better representations for downstream tasks.


Overall, our work bridges these overlapping threads by evaluating a range of architectures and training paradigms via a unified set of metrics. Beyond merely confirming that intermediate layers can be effective, we elucidate \emph{why} this happens, tying it to fundamental properties such as entropy, invariance, and geometry. This novel perspective provides an avenue for both finer-grained diagnostics of large language models and more deliberate design of mid-layer representations for downstream tasks.




\section{A Unified Framework for Neural Representations}
\label{sec:framework}

\begin{tcolorbox}[colback=blue!5,colframe=blue!40!black]
\textbf{Key Takeaway:} Matrix-based entropy unifies seemingly disparate metrics of representation quality, providing a single theoretical lens for analyzing compression, geometry, and invariance.
\end{tcolorbox}

A central challenge in analyzing internal representations is determining \emph{how} to assess their quality. Although existing work draws on numerous ideas—from mutual information to geometric manifold analysis to invariance under augmentations—these threads can seem disparate. In this section, we consolidate them into a \emph{unified theoretical framework} that shows \emph{how} these seemingly different metrics connect and \emph{why} they collectively measure ``representation quality.''  


\subsection{Notation and Motivation}
Consider a neural network that maps inputs $\mathbf{x}$ (e.g., tokens in a sequence) to internal hidden states $\mathbf{Z}$. We denote $\mathbf{Z}\in \mathbb{R}^{N \times D}$ as a matrix of $N$ data samples (or tokens) in $D$ dimensions. Some key questions arise: 

\begin{enumerate}[itemsep=1pt, topsep=0pt]
    \item \emph{How compressed} are these representations?
    \item \emph{How robust} are they to perturbations or augmentations?
    \item \emph{How do they geometrically organize} different inputs?
\end{enumerate}
Answers to these questions can illuminate which layers strike the right balance between preserving relevant features and discarding noise.

\subsection{Matrix-Based Entropy: A Common Theoretical Thread}
\label{subsec:matrix-entropy}
We focus on a key quantity known as \textit{matrix-based entropy} \citep{giraldo2014measures, skean2023dime}, which applies directly to the Gram matrix $\mathbf{K} = \mathbf{Z}\mathbf{Z}^\top$. Let $\{\lambda_i(\mathbf{K})\}$ be the (nonnegative) eigenvalues of $\mathbf{K}$. For any order $\alpha > 0$, define:
\begin{equation}
\label{eq:matrix-based-entropy}
    S_\alpha(\mathbf{Z}) \;=\; \frac{1}{1-\alpha} \,\log \!\biggl(\,\sum_{i=1}^{r}\!\Bigl(\tfrac{\lambda_i(\mathbf{K})}{\mathrm{tr}(\mathbf{K})}\Bigr)^\alpha\biggr),
\end{equation}
where $r = \mathrm{rank}(\mathbf{K}) \le \min(N,D)$. Intuitively, if only a few eigenvalues dominate, $S_\alpha(\mathbf{Z})$ is \emph{small}—indicating a highly compressed representation. Conversely, if $\mathbf{Z}$ is spread out across many principal directions, $S_\alpha(\mathbf{Z})$ is \emph{large}. By varying $\alpha$, one smoothly transitions between notions like collision entropy ($\alpha=2$) and von Neumann entropy ($\alpha\to 1$). We will typically use $\alpha=1$ for simplicity.

\paragraph{Bridging geometry, invariance, and feature locality.}
A key benefit of matrix-based entropy is that it unifies multiple representational perspectives:
\begin{itemize}[itemsep=1pt, topsep=0pt]
    \item \textbf{Compression} or \emph{information content:} 
    A handful of large eigenvalues in $\mathbf{K}=\mathbf{Z}\mathbf{Z}^\top$ indicates that $\mathbf{Z}$ is low-rank, i.e.\ the model has collapsed much of the input variation into fewer dimensions. In contrast, a more uniform eigenvalue spectrum implies higher-entropy, more diverse features.
    
    \item \textbf{Geometric smoothness:} 
    If tokens within a prompt follow a trajectory in embedding space with \emph{sharp turns}, that curvature can manifest as skewed eigenvalue spectra~\citep{hosseini2024curvature}. Curvature also differentiates \emph{local} transitions (token-to-token) from \emph{global} structural patterns across longer segments or entire prompts.

    \item \textbf{Invariance under augmentations:} 
    Metrics like InfoNCE~\citep{oord2018representation} and LiDAR~\citep{thilak2023lidar} effectively measure whether augmentations of the same sample (e.g.\ character swaps) map to \emph{similar} embeddings. Strong invariance corresponds to stable clustering in $\mathbf{Z}\mathbf{Z}^\top$, which again depends on the distribution of eigenvalues and how local vs.\ global features are retained or discarded.

\end{itemize}

Thus, evaluating $S_\alpha(\mathbf{Z})$ provides a single lens for assessing “representation quality” across compression, geometric structure, and invariance—and highlights how \emph{both} local details and global patterns are organized.


\subsection{Representation Evaluation Metrics}
\label{subsec:metrics}
\begin{tcolorbox}[colback=blue!5,colframe=blue!40!black]
\textbf{Key Takeaway:} Information-theoretic, geometric, and invariance-based metrics offer complementary perspectives on representation quality that can all be understood through matrix-based entropy.
\end{tcolorbox}
We now introduce the seven representation evaluation metrics used in our experiments, grouped into three broad categories: (1) \emph{information-theoretic}, (2) \emph{geometric}, and (3) \emph{augmentation-invariance}. All relate back to the Gram matrix $\mathbf{K}$ and hence to Eq.~\eqref{eq:matrix-based-entropy}.


\subsubsection{Information-Theoretic Metrics}
\label{sect:token-embedding-diversity-metrics}

\paragraph{Prompt Entropy.} 

Following \citet{wei2024large}, we apply matrix-based entropy (Eq.~\ref{eq:matrix-based-entropy}) to the token embeddings \emph{within a single prompt}. This \emph{prompt entropy} quantifies how widely tokens are spread in the embedding space. Higher entropy indicates more diverse, less redundant token-level features; lower entropy implies stronger compression.

\paragraph{Dataset Entropy.}

We can also aggregate embeddings \emph{across N prompts} by taking the mean token embedding of each prompt to form $\overline{\mathbf{Z}} \in \mathbb{R}^{N \times D}$. Applying entropy to $\overline{\mathbf{Z}}$ yields a \emph{dataset}-level measure of global diversity—revealing how distinctly the model separates different inputs.

\paragraph{Effective Rank.} ~\cite{effective-rank} can be shown to be a lower bound to \(\exp(S_1(\mathbf{Z}))\), highlighting how dimensionality effectively shrinks if the representation is strongly compressed. We prove this connection in Theorem~\ref{thm:effective-rank-bound}. This has implications for popular representation evaluation metrics such as RankMe~\cite{garrido2023rankme} and LiDAR~\cite{thilak2023lidar}, which are both inspired by Effective Rank.



\subsubsection{Geometric Metrics}
\paragraph{Curvature.}

Proposed by \citet{hosseini2024curvature}, \emph{curvature} captures how sharply the token embeddings turn when viewed as a sequence in $\mathbb{R}^D$. For a prompt of length $L$, let $\mathbf{v}_k = \mathbf{z}_{k+1} - \mathbf{z}_k$ be the difference between consecutive tokens. The average curvature is:
\[
    \bar{C} 
    = \frac{1}{L-2}\sum_{k=1}^{L-2}
      \arccos\!\Bigl(\tfrac{\mathbf{v}_{k+1}^\top \mathbf{v}_k}
                           {\|\mathbf{v}_{k+1}\|\|\mathbf{v}_k\|}\Bigr).
\]
Higher curvature means consecutive tokens shift direction abruptly and more local level features; lower curvature suggests a smoother trajectory and global level features.

\subsubsection{Augmentation Invariance Metrics}
\label{sect:invariance-metrics}

Lastly, we assess how stable the model’s representations are to small perturbations of the same input (e.g., random character swaps, keyboard-level changes; see Appendix). Suppose a prompt $p_i$ is augmented into $p_i^{(a)}$ and $p_i^{(b)}$. After embedding these, we compare the row vectors in $\mathbf{Z}_1, \mathbf{Z}_2 \in \mathbb{R}^{N \times D}$ under different scoring criteria:



\paragraph{InfoNCE.}

This self-supervised objective \citep{oord2018representation} encourages matched samples to lie close in embedding space while pushing unmatched samples away. A \emph{lower} InfoNCE loss indicates stronger invariance to augmentation.


\paragraph{LiDAR.}

LiDAR \citep{thilak2023lidar} uses a linear discriminant approach that measures within-class versus between-class scatter. Treating each prompt as its own class, LiDAR checks how well augmentations form tight clusters. 

\paragraph{DiME.}

Similarly, DiME \citep{skean2023dime} is grounded in matrix-based entropy. It compares real paired samples against random pairings to estimate how uniquely aligned correct augmentations are. 

\subsection{Core Theoretical Results}
\begin{tcolorbox}[colback=blue!5,colframe=blue!40!black]
\textbf{Key Takeaway:} Our theoretical framework establishes concrete connections between representation entropy and downstream performance through properties like effective rank and invariance.
\end{tcolorbox}
Here, we summarize key statements that justify why these metrics meaningfully measure representation quality. We refer to the appendix \ref{appendix:proofs} for details and proofs. Beyond serving as a unifying view, matrix-based entropy also connects to foundational concepts like majorization, Schur concavity, and mutual information. Furthermore, we can directly relate the eigenvalue entropy to the matrix entropy, most naturally via the Effective Rank \cite{effective-rank}. The following theorem makes this connection explicit.



\begin{theorem}[Lower Bound via Effective Rank]
\label{thm:effective-rank-bound}
For Shannon-based entropy ($\alpha\to1$),
\[
\mathrm{EffRank}(\mathbf{Z})
\;\le\;
\exp\bigl(S_1(\mathbf{Z})\bigr),
\]
meaning a large effective rank implies a high entropy. 
\end{theorem}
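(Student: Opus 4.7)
The plan is to express the effective rank as an exponential of a R\'enyi entropy of order two, and then compare it to the Shannon-based entropy $S_1(\mathbf{Z})$ via a Jensen-style inequality. Writing $p_i = \lambda_i(\mathbf{K})/\mathrm{tr}(\mathbf{K})$ for the normalized eigenvalue distribution of the Gram matrix $\mathbf{K} = \mathbf{Z}\mathbf{Z}^\top$, the first step is to rewrite the effective rank as
\[
\mathrm{EffRank}(\mathbf{Z}) \;=\; \frac{\mathrm{tr}(\mathbf{K})^2}{\mathrm{tr}(\mathbf{K}^2)} \;=\; \frac{1}{\sum_i p_i^2} \;=\; \exp\!\bigl(H_2(p)\bigr),
\]
so that both sides of the target inequality are placed on the same probabilistic footing. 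This algebraic step is straightforward and simply records that effective rank and $S_1(\mathbf{Z})$ are both functionals of the same spectral distribution.

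The core step is then to show $H_2(p) \le H_1(p) = S_1(\mathbf{Z})$. The cleanest route is Jensen's inequality applied to the convex function $-\log$, with the distribution $p$ itself playing the role of the weighting: since $\sum_i p_i \cdot p_i = \sum_i p_i^2$, one has
\[
H_2(p) \;=\; -\log\!\Bigl(\sum_i p_i \cdot p_i\Bigr) \;\le\; \sum_i p_i\,(-\log p_i) \;=\; H_1(p).
\]
Exponentiating yields the desired bound $\mathrm{EffRank}(\mathbf{Z}) \le \exp\bigl(S_1(\mathbf{Z})\bigr)$, with equality iff the nonzero eigenvalues of $\mathbf{K}$ are uniform, which doubles as a useful sanity check.

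The main obstacle I anticipate is not the Jensen step itself but rather reconciling the definition of effective rank being invoked: some references define it through the Shannon entropy of the $L^1$-normalized \emph{singular} values $\sigma_i/\sum_j \sigma_j$ rather than of the eigenvalue distribution $p$ of $\mathbf{K}$. If that convention is in force, the proof needs an intermediate step relating the two distributions (the singular-value distribution is, up to renormalization, the square root of $p$) before the R\'enyi-entropy comparison applies. Either way, once the normalization is pinned down, the result reduces to the general fact that $H_\alpha(p)$ is nonincreasing in $\alpha > 0$, which specializes at $\alpha = 2$ and $\alpha \downarrow 1$ to give the two endpoints needed here.
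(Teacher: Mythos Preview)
Your argument is internally valid, but it proves the bound for a different notion of effective rank than the one the paper actually works with. You take $\mathrm{EffRank}(\mathbf{Z})=\mathrm{tr}(\mathbf{K})^2/\mathrm{tr}(\mathbf{K}^2)=\exp\bigl(H_2(p)\bigr)$, the stable-rank / participation-ratio reading; under that convention your Jensen step $H_2(p)\le H_1(p)$ is impeccable and the conclusion follows immediately. The paper, however, adopts the Roy--Vetterli convention you anticipate in your last paragraph: $\mathrm{EffRank}(\mathbf{Z})=\exp\bigl(H_1(\bar\sigma)\bigr)$ with $\bar\sigma_i=\sigma_i/\sum_j\sigma_j$ the $L^1$-normalized \emph{singular} values of $\mathbf{Z}$, and its proof proceeds via majorization and the Schur-concavity of $S_1$ rather than via R\'enyi monotonicity on a fixed distribution.

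This is the point where your caveat becomes decisive rather than cosmetic. Under the singular-value convention one is comparing $H_1(\bar\sigma)$ to $H_1(p)$ with $p_i\propto\sigma_i^{2}$, and squaring-then-renormalizing makes the distribution \emph{more} concentrated: in fact $p$ majorizes $\bar\sigma$, so Schur-concavity yields $H_1(p)\le H_1(\bar\sigma)$, i.e.\ $\exp\bigl(S_1(\mathbf{Z})\bigr)\le\mathrm{EffRank}(\mathbf{Z})$, the reverse of the stated direction. (Concretely, singular values $(4,1)$ give $\mathrm{EffRank}\approx 1.65$ while $\exp(S_1)\approx 1.25$.) The paper's appendix argument asserts $\sigma_i^{2}=\lambda_i$ \emph{after} both sequences have been separately normalized to sum to one, which does not hold in general, and this is exactly where the majorization direction gets flipped. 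So your instinct to flag the definition was right on target: your $H_2\le H_1$ proof is the clean, correct statement for the participation-ratio reading, but the ``intermediate step relating the two distributions'' you allude to cannot rescue the stated direction under the Roy--Vetterli convention, because there the inequality genuinely runs the other way.
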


Under appropriate conditions on the data distribution and model, we can show connections between prompt entropy and dataset entropy via the following scaling behaviors:

\begin{theorem}[Informal]
\label{thm:prompt-dataset-scaling}
\mbox{}
\begin{enumerate}[itemsep=1pt, topsep=0pt]
\item If \emph{prompt entropy} remains near its maximum for \emph{all} prompts, then the \emph{dataset entropy} $S_2\!\bigl(\overline{\mathbf{Z}} \,\overline{\mathbf{Z}}^{\top}\bigr)$ grows on the order of 
$
    \log \!\bigl(\tfrac{L^2}{N}\bigr).
$
\item If \emph{prompt entropy} instead stays near its minimum for \emph{all} prompts, then dataset entropy grows more slowly, on the order of
$
    \log \!\bigl(\tfrac{L^2}{N^3}\bigr).
$
\end{enumerate}
\end{theorem}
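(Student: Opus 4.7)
The plan is to work with the $\alpha=2$ (collision) form of Eq.~\eqref{eq:matrix-based-entropy}, which for $\mathbf{K} = \overline{\mathbf{Z}}\,\overline{\mathbf{Z}}^{\top}$ reduces to $S_2(\mathbf{K}) = \log\!\bigl((\operatorname{tr}\mathbf{K})^2 / \operatorname{tr}(\mathbf{K}^2)\bigr)$. This is attractive because both traces admit explicit expansions in terms of the mean-pooled prompt vectors $\overline{\mathbf{z}}_i \in \R^D$: namely $\operatorname{tr}\mathbf{K} = \sum_i \|\overline{\mathbf{z}}_i\|^2$ and $\operatorname{tr}(\mathbf{K}^2) = \sum_i \|\overline{\mathbf{z}}_i\|^4 + \sum_{i\neq j}\langle \overline{\mathbf{z}}_i,\overline{\mathbf{z}}_j\rangle^2$. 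The scaling question thus becomes: how does the ratio of a squared first moment to a raw second moment of these pooled vectors depend on how tokens are distributed \emph{inside} each prompt?

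Next, I would translate the two prompt-entropy regimes into structural statements about each prompt's token matrix $\mathbf{Z}_i \in \R^{L\times D}$. Maximal prompt entropy forces $\mathbf{Z}_i\mathbf{Z}_i^{\top} \propto I_L$, so the $L$ tokens are nearly orthogonal and averaging them produces a pooled vector of norm roughly $1/\sqrt{L}$ times the per-token scale; under a generic-position hypothesis across prompts, the $\overline{\mathbf{z}}_i$'s then behave like independent, approximately isotropic vectors in $\R^D$ whose pairwise inner products concentrate at the random-direction scale $1/\sqrt{D}$. Minimal prompt entropy, by contrast, collapses $\mathbf{Z}_i$ to a rank-one structure, so $\overline{\mathbf{z}}_i$ is essentially a single token with no variance reduction from averaging, and the cross-prompt inner products now track the full covariance of the underlying direction distribution rather than its decorrelated remainder. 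Substituting the resulting scales for $\|\overline{\mathbf{z}}_i\|^2$ and $\langle \overline{\mathbf{z}}_i,\overline{\mathbf{z}}_j\rangle^2$ into the two traces above, and retaining only the dominant terms in the regime $D \gg N$ in which the theorem lives, I expect the ratio $(\operatorname{tr}\mathbf{K})^2/\operatorname{tr}(\mathbf{K}^2)$ to come out on the order of $D^2/N$ in the high-entropy case (diagonal and off-diagonal terms balanced through the $1/D$ decorrelation factor) and on the order of $D^2/N^3$ in the low-entropy case (extra coherence among the rank-one prompt directions inflating the off-diagonal sum by $N^2$). Taking logarithms yields the two claimed scalings.

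The hard part will be making the informal ``appropriate conditions on the data distribution and model'' precise enough for these heuristic concentrations to become genuine estimates. The entire separation between the two regimes rests on how $\langle\overline{\mathbf{z}}_i,\overline{\mathbf{z}}_j\rangle^2$ scales relative to $\|\overline{\mathbf{z}}_i\|^2\|\overline{\mathbf{z}}_j\|^2$: the max-entropy calculation needs a decorrelation hypothesis (roughly, independent isotropic prompt directions), while the additional $N^{-2}$ factor in the min-entropy case only emerges when the single-token directions share a coherent low-dimensional structure that boosts the double sum by a factor of $N^2$. Stating these hypotheses cleanly, verifying they are generic rather than degenerate, and controlling the boundary regime $D \sim N$ where the two trace contributions cross over are the main technical obstacles I anticipate.
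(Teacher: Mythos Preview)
Your reduction to the $\alpha=2$ entropy and the expansion of $\operatorname{tr}(\mathbf{K}^2)$ in terms of the pooled vectors match the paper's starting point, and you correctly identify that averaging orthonormal tokens shrinks $\|\overline{\mathbf{z}}_i\|$ while averaging identical tokens does not. But the plan diverges from the paper's argument in a way that creates a real gap.

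The paper's appendix proofs do \emph{not} use the trace-normalized form $S_2(\mathbf{K})=\log\bigl((\operatorname{tr}\mathbf{K})^2/\operatorname{tr}(\mathbf{K}^2)\bigr)$ that you write down; they work directly with $S_2(QQ^\top)=-\log\|QQ^\top\|_F^2$. This is not cosmetic: the normalized version is scale-invariant in the rows of $\overline{\mathbf{Z}}$, so the norm contrast you correctly spotted cannot by itself produce two different answers. Concretely, in the regime $D\gg N$ the ratio $(\operatorname{tr}\mathbf{K})^2/\operatorname{tr}(\mathbf{K}^2)$ is bounded above by $\operatorname{rank}(\mathbf{K})\le N$, so it can never reach either $D^2/N$ or $D^2/N^3$; under near-orthogonal pooled vectors both regimes in fact collapse to $S_2\approx\log N$ with your formula.

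Your proposed rescue---that the min-entropy case requires ``coherent low-dimensional structure'' among the rank-one directions to inflate the off-diagonal sum by $N^2$---is also not what the paper does. The formal versions assume an orthogonally equivariant model and isotropic Gaussian data, which force the pooled vectors $\overline{\mathbf{z}}_i$ to be uniform on a sphere in \emph{both} regimes; the \emph{same} near-orthogonality concentration bound (their Proposition on random unit vectors) handles the cross terms in both proofs. The separation between $\log(D^2/N)$ and $\log(D^2/N^3)$ comes entirely from the different radii of those spheres, feeding directly into the raw $\|QQ^\top\|_F^2$, not from any cross-prompt correlation. To align with the paper, drop the trace normalization, adopt the equivariance and isotropy hypotheses explicitly, and let the norm of $\overline{\mathbf{z}}_i$ alone drive the two scalings through the unnormalized Frobenius norm.
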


\noindent
In short, high token-level (prompt) diversity encourages broader \emph{global} diversity in the dataset-level embeddings, whereas over-compressing token representations can limit how effectively different prompts separate. Our subsequent analysis connects these ideas to self-supervised objectives like InfoNCE, which also tie higher entropy to stronger robustness and discriminability in the learned representations.

\begin{theorem}[Dataset Entropy Bounds InfoNCE]\label{thm:nce-bound}
For data $X$ and representation $Z(X)$, the InfoNCE loss on $N$ samples satisfies:
\[
\log(N) - \mathrm{InfoNCE} \;\;\le\;\; I(X; Z) \;\;\le\;\; H(Z),
\]
where $H(Z)$ is interpretable as matrix-based entropy at the dataset level. Hence, reducing InfoNCE implies learning a higher-entropy (and thus often more robust) representation.
\end{theorem}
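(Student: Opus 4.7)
The plan is to split the chain into its two halves and handle them independently. For the upper bound $I(X;Z)\le H(Z)$, I would appeal to the standard identity $I(X;Z)=H(Z)-H(Z\mid X)$ together with non-negativity of conditional entropy. The only subtlety is that the paper's $H(Z)$ is intended as a matrix-based (spectral) entropy rather than a classical Shannon entropy; to close that gap I would invoke the interpretation from Section~\ref{subsec:matrix-entropy} in which $S_\alpha$ applied to the dataset-level Gram matrix $\overline{\mathbf{Z}}\,\overline{\mathbf{Z}}^{\top}$ coincides, in the $\alpha\to 1$ limit, with the von Neumann entropy of the normalized second-moment operator of $Z$, which upper-bounds $I(X;Z)$ in the usual information-theoretic sense.

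For the lower bound $\log(N)-\mathrm{InfoNCE}\le I(X;Z)$, I would reproduce the classical argument of \citet{oord2018representation}. Writing the InfoNCE loss on a batch with one positive and $N-1$ negatives as a softmax cross-entropy over a critic $f(x,z)$, it is a standard variational fact that the loss is minimized when $f(x,z)\propto p(z\mid x)/p(z)$. Substituting this optimal density-ratio critic and applying Jensen's inequality to the logarithm of the Monte Carlo estimator of the partition function yields
\begin{equation*}
\mathrm{InfoNCE}^{\star} \;\ge\; \log N \;-\; \mathbb{E}\!\left[\log\frac{p(Z\mid X)}{p(Z)}\right] \;=\; \log N - I(X;Z).
\end{equation*}
Since any actual critic (including the similarity critic implicit in the embedding $Z$) can only raise the loss above $\mathrm{InfoNCE}^{\star}$, the same inequality propagates to the empirical InfoNCE, and rearranging gives the claim.

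The main obstacle is not the variational MI bound itself, which is classical, but the bridge between the population quantities ($H(Z)$, $I(X;Z)$) and the finite-sample spectral quantity that the paper actually computes. I would close this gap by treating the mean-pooled embedding matrix $\overline{\mathbf{Z}}$ as inducing an empirical kernel whose spectrum converges to that of the population second-moment operator; the matrix-based entropy $S_1(\overline{\mathbf{Z}})$ then serves as a consistent estimator of the von Neumann entropy, making the upper bound $I(X;Z)\le H(Z)$ a meaningful statement about the dataset-level representation and thereby justifying the interpretive clause in the theorem.
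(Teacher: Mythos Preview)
Your approach is essentially identical to the paper's: the paper's proof is two lines, citing \citet{oord2018representation} directly for the lower bound and writing $I(X;Z)=H(Z)-H(Z\mid X)\le H(Z)$ for the upper bound. In fact you are more thorough than the paper, since you sketch the optimal-critic/Jensen argument behind the InfoNCE bound and worry explicitly about reconciling Shannon $H(Z)$ with the matrix-based spectral entropy; the paper does not address that gap at all, merely asserting in the theorem statement that ``$H(Z)$ is analogous to the Dataset Entropy.''
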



\paragraph{Practical outlook.}

Overall, our theoretical analysis shows that \emph{compression} (entropy), \emph{geometry} (curvature, rank), and \emph{invariance} (e.g.\ InfoNCE) are all facets of how the Gram matrix $\mathbf{Z}\mathbf{Z}^\top$ distributes variance. Examining these metrics across different layers reveals exactly where a network “prunes” redundancy (low entropy) versus preserving essential distinctions (high entropy). This unified perspective also facilitates cross-architecture comparisons (e.g.\ transformers vs.\ SSMs) by highlighting how each architecture organizes information internally. Beyond offering a theoretical foundation, it provides a practical blueprint for diagnosing, tuning, and improving hidden-layer representations.

\begin{figure*}[!t]
    \centering
    \begin{subfigure}[b]{0.28\textwidth}
        \centering
        \includegraphics[width=\textwidth]{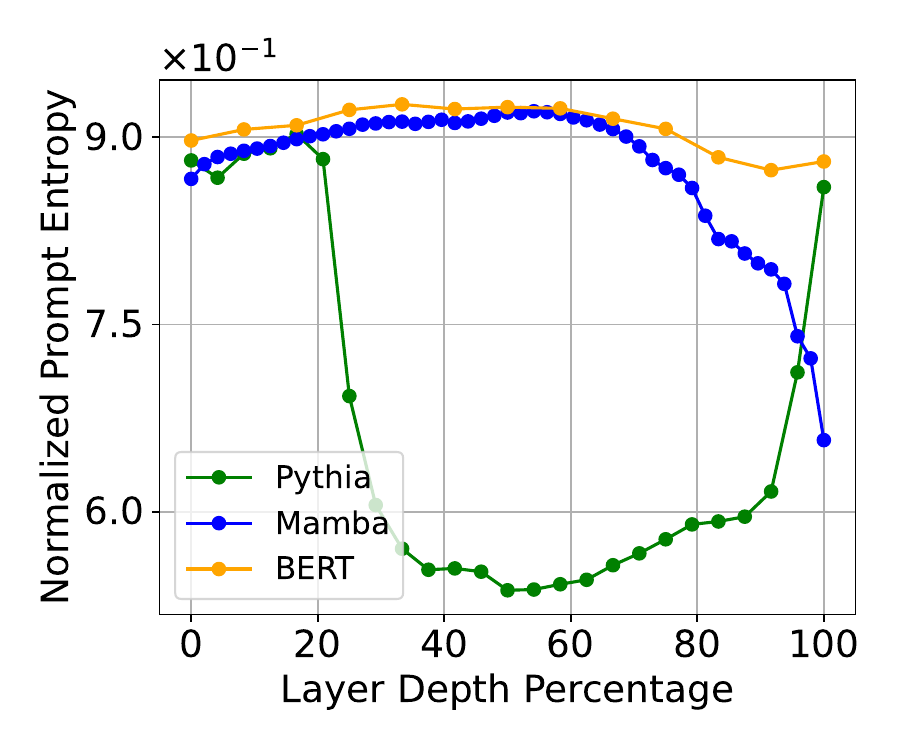}
        \caption{Prompt Entropy}
    \end{subfigure}%
    \hspace{0.04\textwidth}%
    \begin{subfigure}[b]{0.28\textwidth}
        \centering
        \includegraphics[width=\textwidth]{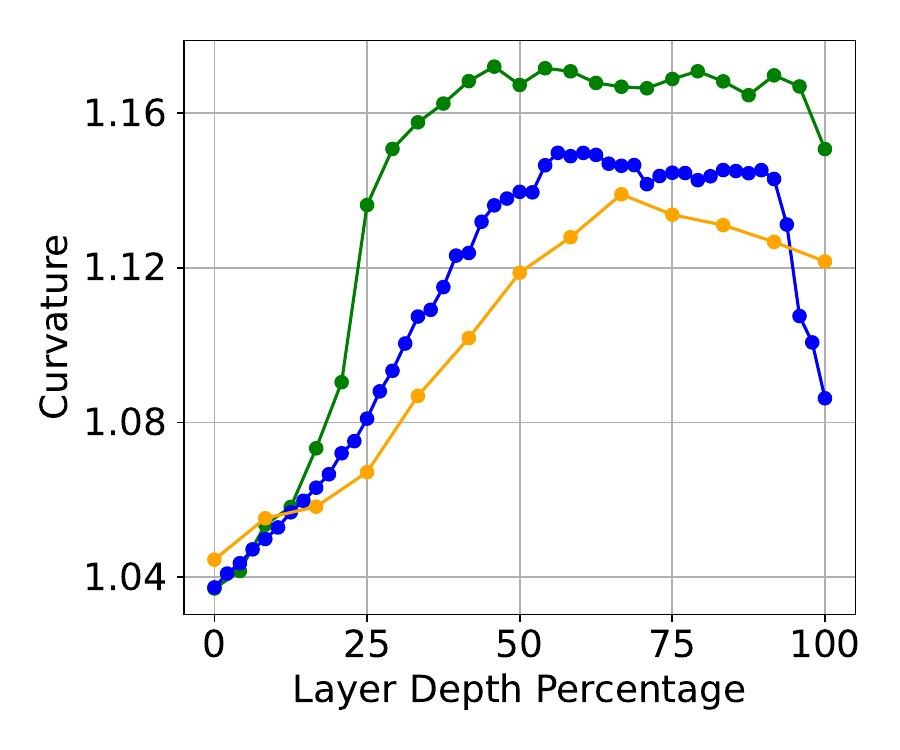}
        \caption{Curvature}
    \end{subfigure}
    \hspace{0.04\textwidth}
    \begin{subfigure}[b]{0.28\textwidth}
        \centering
        \includegraphics[width=\textwidth]{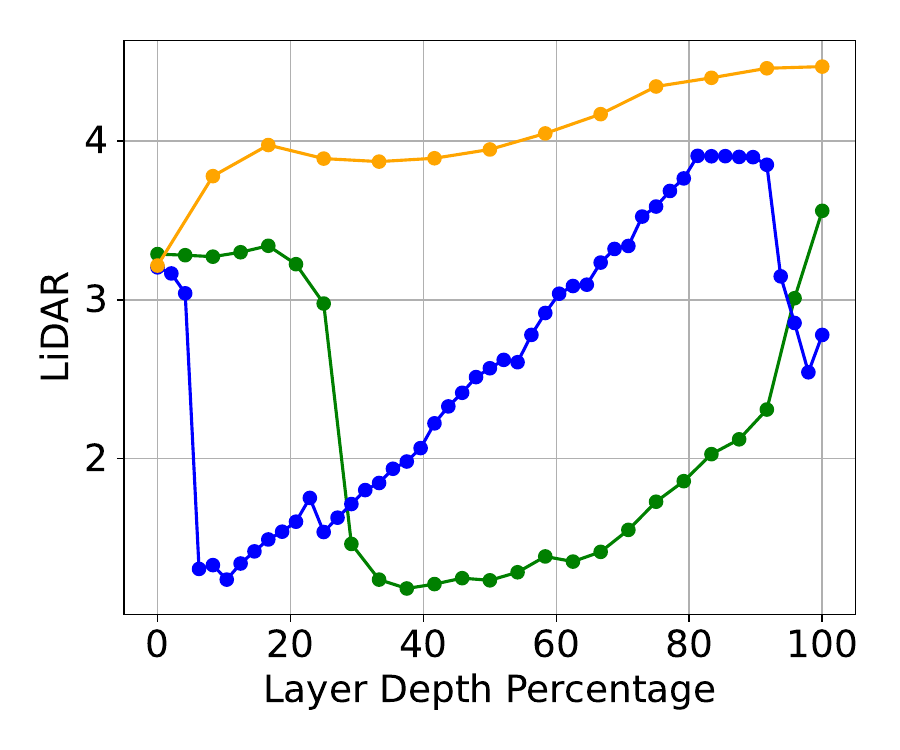}
        \caption{LiDAR}
    \end{subfigure}%
  \caption{\textbf{Pythia and Mamba's intermediate layers show pronounced changes in representation quality metrics, while BERT’s remain more stable.} Three representation evaluation metrics calculated on the wikitext dataset for every  layer in Pythia-410M, Mamba 370M, and BERT-base architectures. The x-axis denotes layer depth as a percentage, allowing fair comparison between models with different layer counts.}
  \label{fig:metrics-across-architectures}
\end{figure*}

\section{Empirical Results}
\label{sec:experiments}
In this section, we empirically test our theoretical framework through extensive experiments across architectures, scales, and training regimes. We focus on three key questions:

\begin{itemize}[itemsep=1pt, topsep=0pt]
    \item \textbf{Do intermediate layers consistently outperform final layers across diverse downstream tasks?}
    \item \textbf{How do these intermediate representations differ across  architectures, training stages, and scales?}
    \item \textbf{How does post-training methods (e.g., fine-tuning and chain-of-thought) reshape representations?}
\end{itemize}



\subsection{Downstream Task Performance}
\label{sect:downstream-tasks}
\begin{tcolorbox}[colback=blue!5,colframe=blue!40!black]
\textbf{Key Takeaway:} Intermediate layers of language models consistently outperform final layers across all architectures and tasks, challenging the conventional wisdom of using final-layer representations.
\end{tcolorbox}

In this section, we use intermediate layers for downstream embedding tasks and employ our unified framework from Section~\ref{sec:framework}, measuring all the embeddings across all layers.

\subsubsection{Experimental Setup}
\paragraph{Models} We evaluate three distinct architectural families:
Pythia and Llama3 (decoder-only transformers)~\cite{pythia,llama3}, Mamba (state space model)~\cite{mamba}, BERT (encoder-only transformer)~\cite{devlin2018bert} and LLM2Vec  models (bidirectional attention)~\cite{behnamghader2024llm2vec}.

\paragraph{Tasks} We test each layer's embeddings on 32 tasks from the Massive Text Embedding Benchmark (MTEB)~\citep{muennighoff2022mteb}, spanning classification, clustering, and reranking dor a comprehensive evaluation across various tasks. We refer to the Appendix for details.





\subsubsection{Intermediate Layers Often Outperform Final Layers}
\label{subsec:intermediate-outperform}

Are final-layer embeddings indeed optimal for downstream tasks? In Figure~\ref{fig:layerwise-main-scores}, we compare average performance on MTEB tasks across all layers of the three models.

\paragraph{Key observation.}

\emph{In nearly every task, some intermediate layer outperforms the final layer.} The absolute improvement ranges from 2\% to as high as 16\% on average, and the best layer often resides around the mid-depth of the network. This phenomena is consistent across all the different architectures. This confirms emerging observations in recent work for generation tasks~\cite{bordes2022guillotine, aim, pixelgpt, fan2024notalllayers} and extends them to a wider range of benchmarks and tasks.

\paragraph{Why do these layers matter?}
From our theoretical perspective, intermediate layers appear to strike a balance between retaining sufficient information (avoiding over-compression) and discarding low-level noise. Later in Section~\ref{subsec:arch-scale-diffs}, we show that these sweet spots are not random but tied to how intermediate layers are processing information.

\subsubsection{Layer-Wise Metrics Correlate with Downstream Performance}

\label{subsec:metrics-correlation}
To validate our framework, we analyze how each evaluation metric correlates with downstream performance. Figures~\ref{fig:dcor_repmetric_perf} and~\ref{fig:corr_repmetric_perf}
 show distance correlations between metrics and task scores for Pythia-410M. We find that all metrics exhibit strong relationships with downstream performance. Among them, curvature, DiME, and InfoNCE stand out with particularly high correlations. These associations remain robust across different correlation measures, including Spearman and Kendall, reinforcing the reliability of our findings.

Our results suggest that our metrics capture some aspects of intermediate representations that contribute to downstream utility. In Appendix~\ref{appendix:downstream}, we leverage these strong correlations to select high-performing layers in an unsupervised manner, following~\cite{agrawal2022alphareq, garrido2023rankme, thilak2023lidar}. In short, we can identify an intermediate layer that surpasses the final layer in downstream performance—without using any task-specific labels. For instance, using DiME-based layer selection leads to a 3\% average improvement in MTEB scores for the Pythia-410M model.





\begin{figure}[!hb]
  \centering
 \includegraphics[width=.9\linewidth]{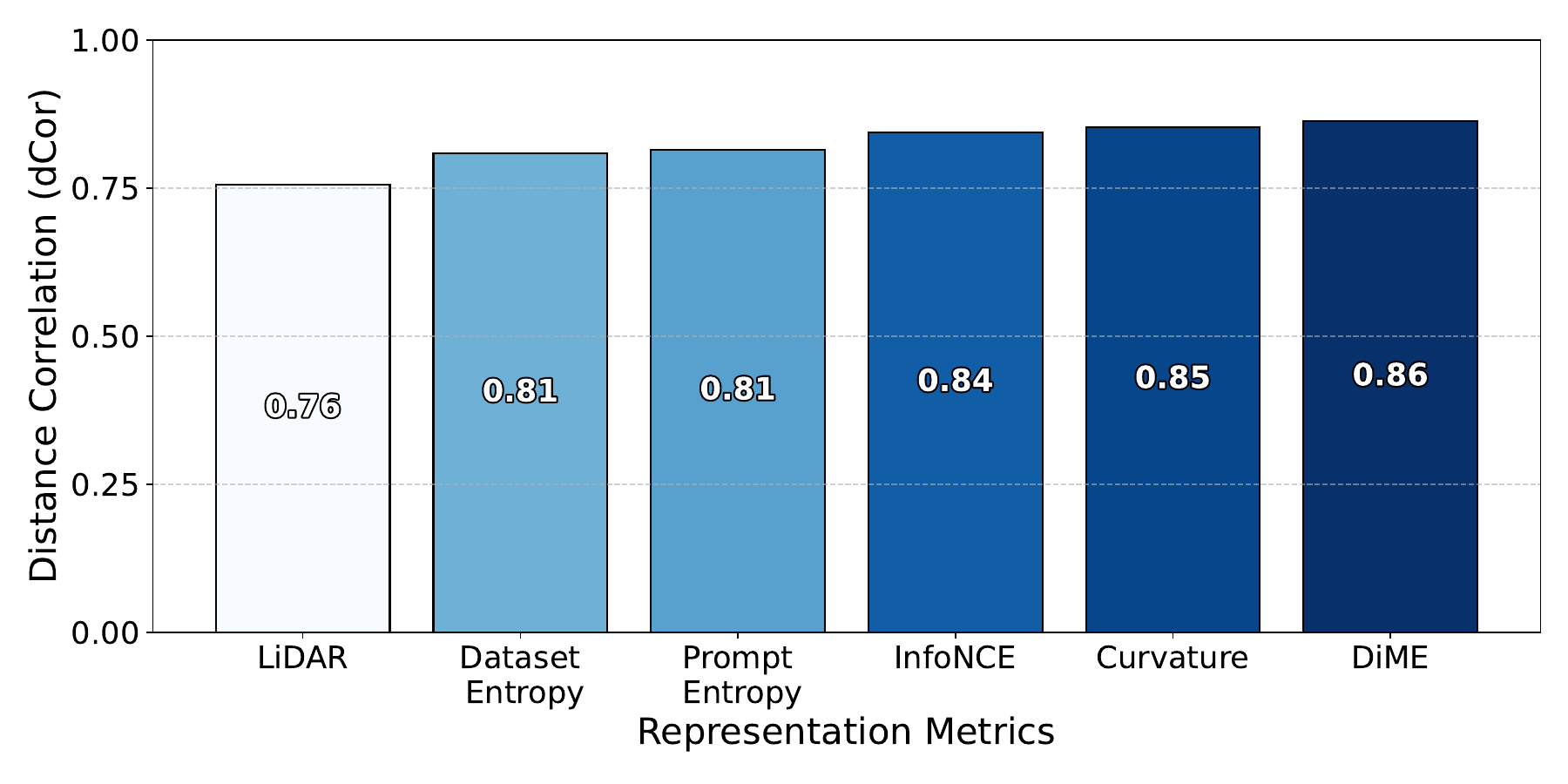}
  \caption{\textbf{Relationship between representation metrics and task performance averaged across layers for Pythia 410M.} Using distance correlation (dCor), we see strong associative relationships across the board with DiME exhibiting the strongest relationship with downstream performance. We use dCor due to its robustness and ability to measure both linear and non-linear relationships (dCor $\in [0,1]$ with 0 indicating statistical independence and 1 indicating strong dependency). We defer additional results to the Appendix.}
  \label{fig:dcor_repmetric_perf}
\end{figure}

\subsection{Architectural and Scale Differences}
\label{subsec:arch-scale-diffs}
\begin{tcolorbox}[colback=blue!5,colframe=blue!40!black]
\textbf{Key Takeaway:} Different architectures exhibit distinct patterns of information compression. Autoregressive models show mid-layer bottlenecks while bidirectional models maintain more uniform trends.
\end{tcolorbox}
Aside from strong correlations with downstream performance, we can use our evaluation framework to assess the internal behaviors of LLMs.  In both this section and Section~\ref{subsec:training-progression}, we use WikiText-103~\citep{merity2016pointer} for analyzing our representation metrics on standard textual data. To investigate how architecture and model size influence representation quality, we compare three fundamentally different LLM variants—BERT (encoder-only), Pythia (decoder-only), and Mamba (state-space model)—and then scale up Pythia to observe emerging trends.

\paragraph{Encoder vs.\ Decoder vs.\ SSM.}
Figure~\ref{fig:metrics-across-architectures} shows how prompt entropy, curvature, and augmentation metrics evolve across each model’s layers. BERT, which encodes the entire input bidirectionally, generally maintains high entropy across layers, suggesting minimal compression: the model can see all tokens at once and need not discard as much information. By contrast, the decoder-only Pythia exhibits a strong mid-layer entropy dip, reflecting its autoregressive objective’s tendency to filter or prune non-local details in the middle of the network. As a result, Pythia’s ``sweet spot'' for downstream tasks often lies around mid-depth, where it balances essential context and compression. Mamba, meanwhile, processes sequences through a state-space approach that yields flatter, more uniform curves across depth: it neither retains as much information as BERT nor compresses as aggressively as Pythia’s mid-layers. These conclusions align with ~\citet{anisotropy} which showed a flat layer-wise anisotropy for encoder models and a spike in intermediate layer anisotropy for decoder models.



\paragraph{Scaling Size Effects.}
In Figure~\ref{fig:metrics-across-scale}, we analyze Pythia models ranging from 14M to 1B parameters. Larger models display more pronounced intermediate compression (entropy dips), indicating a heightened ability to distill relevant features. We also observe smoother token trajectories (lower curvature) and stronger invariance (higher LiDAR), consistent with findings that bigger models more effectively filter noise and capture long-range dependencies. These trends reinforce why performance peaks in the middle of the network: larger models hold more capacity to compress intermediate representations, yet still preserve crucial semantic details.

\paragraph{Finetuning Effects}
In Figure~\ref{fig:metrics-across-finetuning}, we study how finetuning affects the internal representations of Llama3~\cite{llama3}. We compare the baseline Llama3-8B to two finetuned LLM2Vec models~\cite{behnamghader2024llm2vec}. The LLM2Vec-mntp-unsup-simcse model enables bidirectional attention in Llama3 and goes through two unsupervised training phases to improve Llama3's performance on embedding tasks. The LLM2Vec-mntp-supervised adds an additional supervised finetuning phase. It is clear that both finetuned models have improved augmentation invariance. Furthermore, the unsupervised model has higher prompt entropy than Llama3 while the supervised model has less.

\paragraph{Layer-Level Analysis of Transformer Sub-Components.} While our experiments treat each transformer layer as a single unit, transformer blocks are composed of multiple sub-layers (pre-attention normalization, self-attention, residuals, MLPs). By measuring entropy after each sub-layer, we find in Figure~\ref{fig:pythia-stages} that \emph{residual connections} drive the mid-network compression observed in Section~\ref{subsec:arch-scale-diffs}. Specifically:



\begin{itemize}[itemsep=1pt, topsep=0pt] 
\item \textbf{Sub-layers \emph{before} residuals} (e.g.\ pre-attention, attention scores, or MLP pre-residual outputs) often show only mild compression.
\item \textbf{Residual sub-layers} exhibit a pronounced drop in entropy, indicating a significant filtering of information. A concurrent study \citep{llm-depth-residuals} observed a decrease in the residual stream norm in the second half of decoder models, reinforcing our findings.
\end{itemize}

The strong entropy ``valley'' at intermediate layers is tied to how residual paths merge new signals with the existing hidden state. This aligns with prior work indicating that residuals act as a regularizer~\citep{marion2024implicit}, smoothing out spurious components in hidden representations.



\subsection{Impact of Training Progression}
\label{subsec:training-progression}
\begin{tcolorbox}[colback=blue!5,colframe=blue!40!black]
\textbf{Takeaway:} Significant changes during training occur in intermediate layers and early layers stabilize quickly, supporting the detokenization hypothesis.
\end{tcolorbox}

We measure Pythia's metrics at multiple checkpoints to understand how layer-wise representations evolve throughout training (Figures~\ref{fig:metrics_across_training} and ~\ref{fig:full-metrics_across_training}). Two main observations emerge:

\begin{figure*}[!t]
  \centering
  \centering
    \begin{subfigure}[b]{0.30\textwidth}
        \centering
    \includegraphics[width=\textwidth]{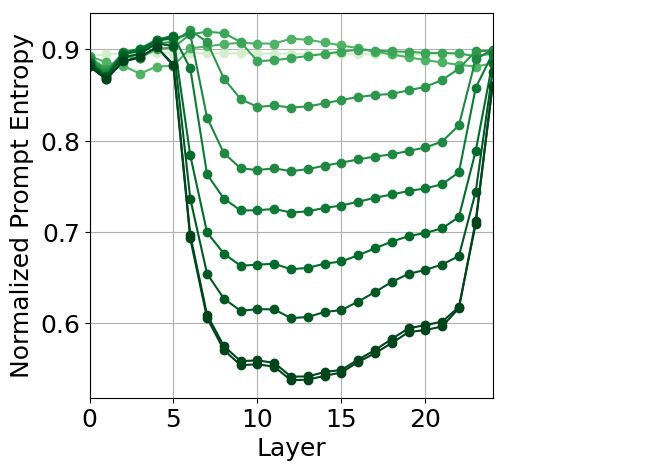}
        \caption{Prompt Entropy}
    \end{subfigure}%
    \hspace{0.02\textwidth}%
    \begin{subfigure}[b]{0.30\textwidth}
        \centering
        \includegraphics[width=\textwidth]{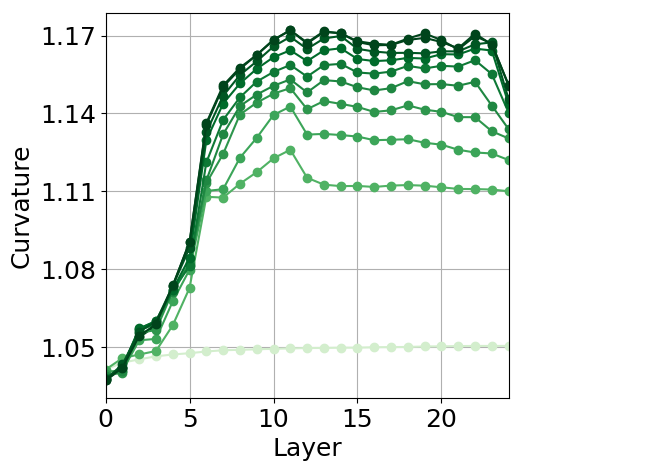}
        \caption{Curvature}
    \end{subfigure}
    \hspace{0.02\textwidth}
    \begin{subfigure}[b]{0.30\textwidth}
        \centering
        \includegraphics[width=\textwidth]{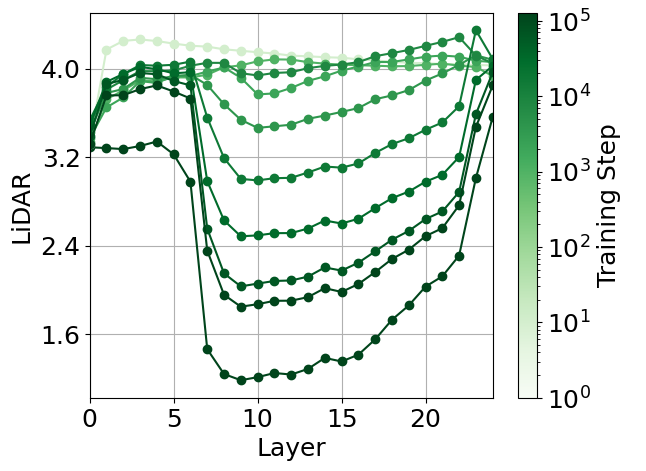}
        \caption{LiDAR}
    \end{subfigure}%
  \caption{\textbf{Strong trends in intermediate behavior emerge during training} Representation evaluation metrics across layers at various Pythia-410M training checkpoints, ranging from step 1 to the final step at 143k. The x-axis is the model layer, showing how training affects different layers, while the colors are different checkpoints during  training.}
  \label{fig:metrics_across_training}
\end{figure*}

\paragraph{Intermediate Layers Undergo the Most Change.}
The largest shifts in representation quality occur in mid-depth layers. Specifically, \emph{prompt entropy} steadily decreases there as training progresses, implying that intermediate layers increasingly compress and abstract the input. Meanwhile, \emph{LiDAR} scores are minimal in these same layers. Likewise, \emph{curvature} becomes smoother in the middle of the network, suggesting the model refines its internal structure to capture longer-range or more nuanced patterns in language.

\paragraph{Early Layers Stabilize Quickly.}
In contrast to intermediate layers, the earliest layers change very little after the initial phase of training. This observation aligns with the ``detokenization'' hypothesis of~\citet{lad2024remarkable}, which posits that the main functional role of early layers is to convert raw tokens into a basic embedding space. This idea is closely related to the ``shared task'' layers of ~\citet{other-layer-by-layer}, introduced in the context of instruction tuning on diverse tasks. In particular, they show that the first nine layers of LlaMA 2 7B~\citep{llama2} perform general task-agnostic operations. As a result, the most substantial changes to  representations, such as enhanced compression, are driven primarily by the intermediate layers, reinforcing their importance for learning robust, high-level features.

\subsection{Impact of Chain-of-Thought Finetuning}
 \begin{tcolorbox}[colback=blue!5,colframe=blue!40!black]
 \textbf{Key Takeaway:} CoT finetuning enables models to maintain richer context throughout their layers.

 \end{tcolorbox}

\begin{figure}[!t]
    \centering
    \includegraphics[width=0.8\linewidth]{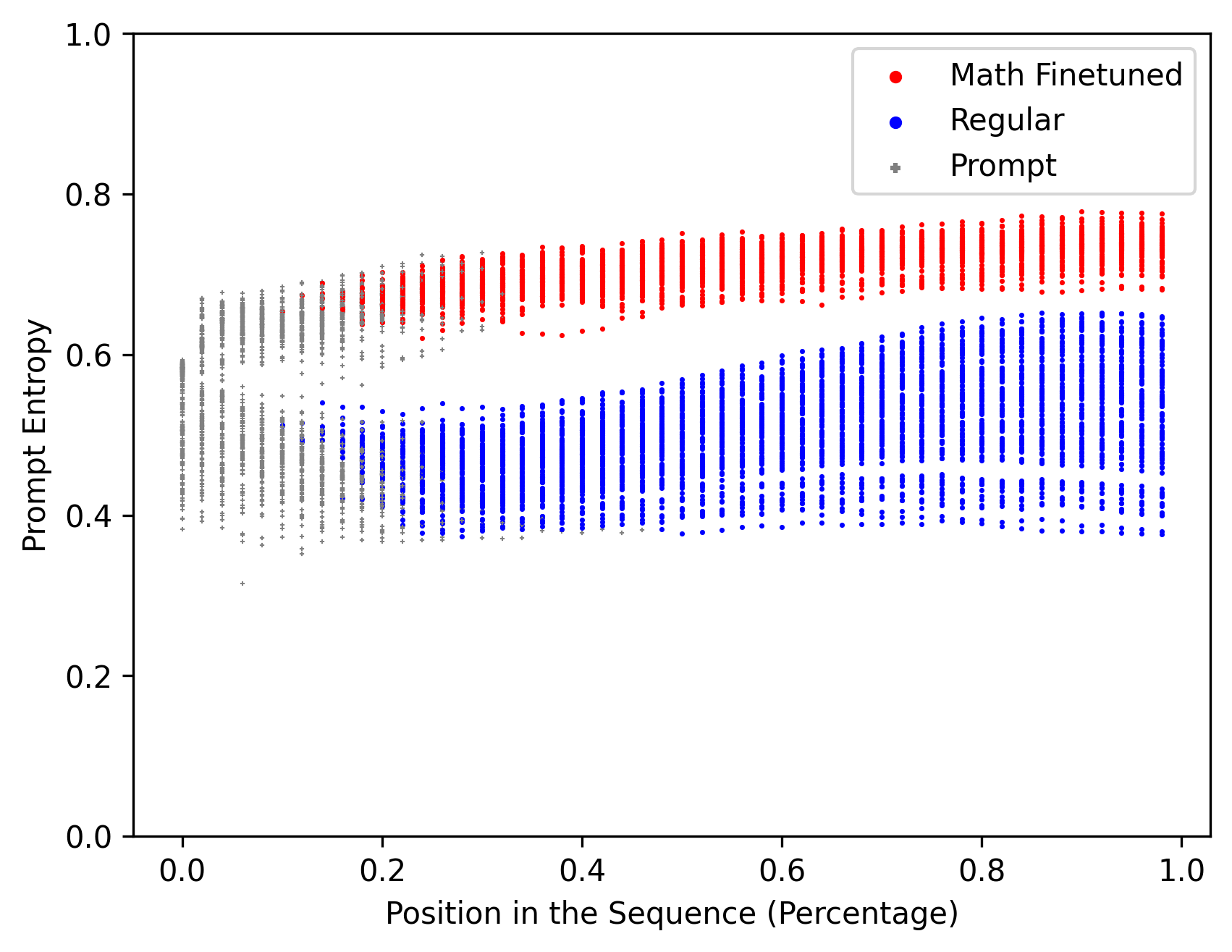}
    \caption{Token-level prompt entropy across sequence lengths for Qwen 2.5 and Qwen 2.5-Math. The base model (Qwen 2.5) exhibits greater prompt compression, while the finetuned (Qwen 2.5-Math) has higher entropy, indicating more information retention.}
    \label{fig:chain-of-thought}
\end{figure}

Recent work has highlighted Chain-of-Thought (CoT) finetuning as a powerful strategy for improving reasoning capabilities \cite{seqvcr, deepseek}. To examine its effects on representations, in Figure~\ref{fig:chain-of-thought} we compare Qwen 2.5 and Qwen 2.5-Math \cite{qwen2.5}, where the latter underwent additional math pretraining and CoT finetuning. Measuring token-level prompt entropy across sequence length reveals that the finetuned model maintains higher entropy with lower variance across examples.

%
These findings suggest that CoT finetuning encourages models to preserve more context throughout their hidden layers, enabling better multi-step reasoning. Our framework provides a quantitative lens into how CoT fine-tuning pushes models to maintain richer internal representations across sequences, explaining its effectiveness in multi-step tasks. While CoT traces can be inspected directly in these models, our approach is particularly valuable for analyzing models that reason in continuous latent space \citep{hao2024training}.

\section{Extreme Input Conditions}
\label{subsec:extreme-inputs}

\begin{figure*}[!ht]
    \centering
    \begin{subfigure}[b]{0.32\textwidth}
        \centering
        \includegraphics[width=\linewidth]{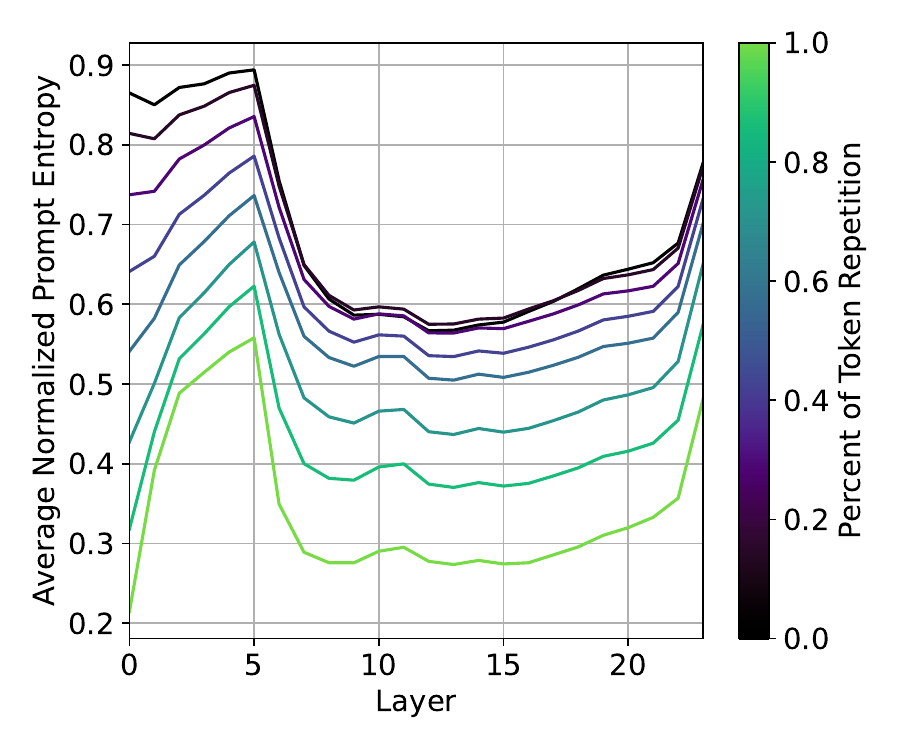}
        \caption{Repetition}
        \label{fig:pythia_increasing_repetition}
    \end{subfigure}\hfill
    \begin{subfigure}[b]{0.32\textwidth}
        \centering
        \includegraphics[width=\linewidth]{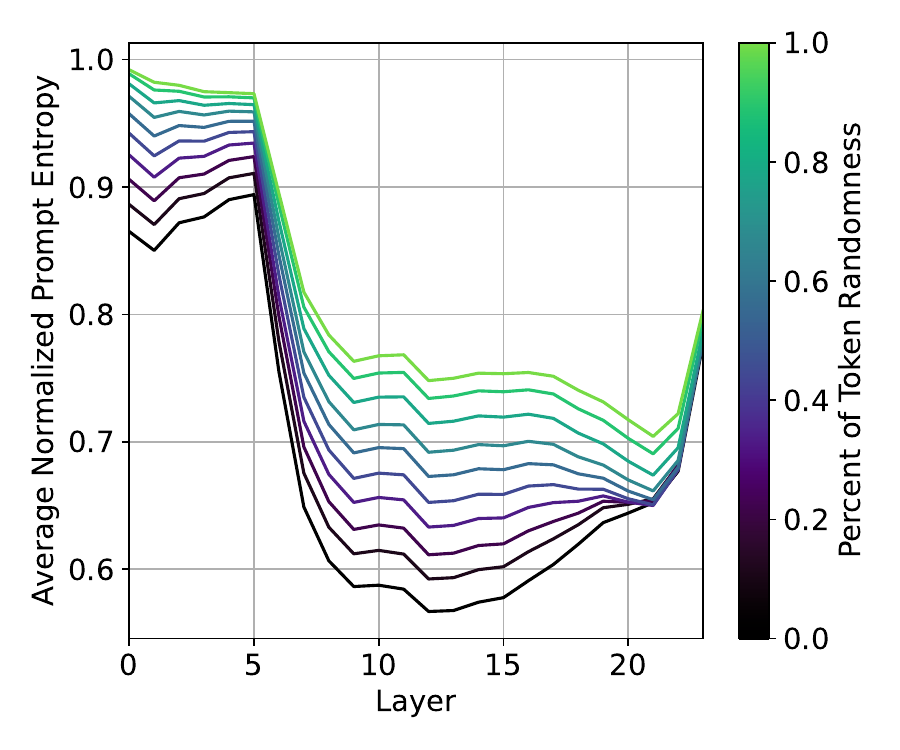}
        \caption{Randomness}
        \label{fig:pythia_increasing_randomness}
    \end{subfigure}\hfill
    \begin{subfigure}[b]{0.32\textwidth}
        \centering
        \includegraphics[width=\linewidth]{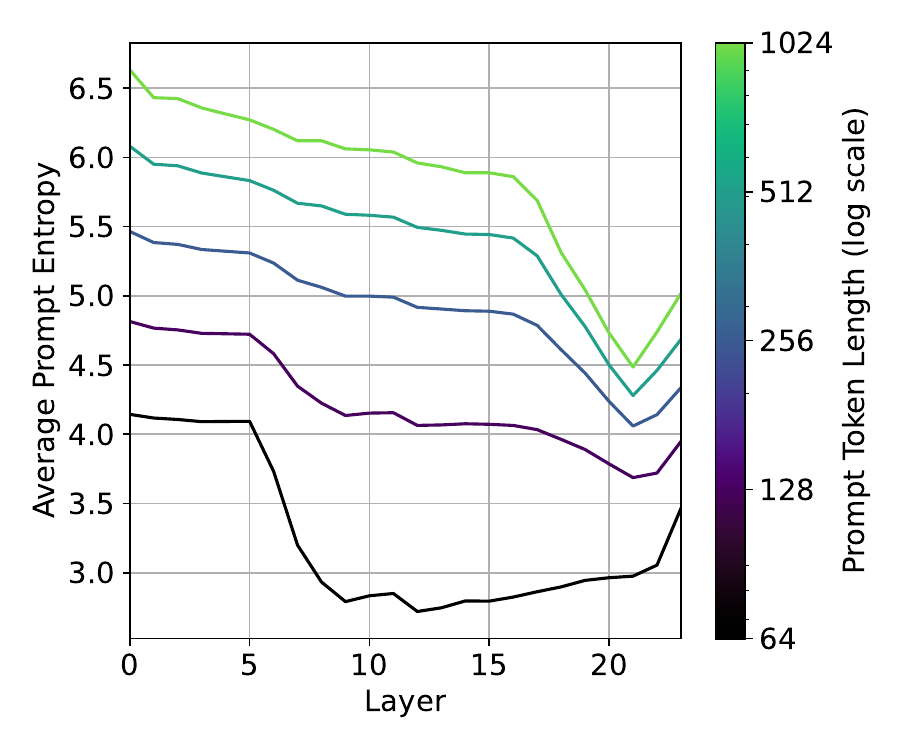}
        \caption{Random Prompt Length}
        \label{fig:prompt-random-raw}
    \end{subfigure}
    \caption{\textbf{Prompt entropy across layers of Pythia 410M under various extreme input conditions.} (a) Increasing token repetition leads to decreased entropy in intermediate layers. (b) Increasing token randomness results in higher entropy, especially in initial layers. (c) Unnormalized prompt entropy increases with prompt length due to the larger number of tokens. These results demonstrate how the model's internal representations adapt to different types of input perturbations.}
    \label{fig:pythia-increasing-intensity}   
\end{figure*}

To better probe the underlying factors affecting representation quality, we inspect each layer's responsiveness to different input types. We use Pythia-410M on three types of \emph{extreme} prompts and measure prompt entropy across layers (Figure~\ref{fig:pythia-increasing-intensity}). We prove examples of these prompts in Appendix~\ref{appendix:extreme-prompts}. 
Overall, we find that:
\begin{enumerate}
    \item \textbf{Token repetition compresses intermediate layers.} As $p$ increases (i.e., more repeated tokens), \emph{prompt entropy} decreases sharply in mid-depth layers, suggesting that the model recognizes/encodes repetitive patterns and discards redundancy in its internal representation.

    \item \textbf{Random tokens inflate early-layer entropy.} Adding token-level randomness, increases entropy significantly in early layers, revealing their sensitivity to noise. In contrast, deeper layers are more robust.

\end{enumerate}

Overall, these results confirm that \emph{intermediate layers} play a major role in handling complex or unusual inputs, selectively compressing or filtering out repetitive patterns while retaining crucial distinctions. Early layers are more sensitive to noise and the incremental benefit of adding more tokens diminishes with prompt length. This behavior highlights the diverse ways in which different layers balance the trade-off between preserving and discarding information, underscoring the significance of intermediate representations.

\section{Comparison to Vision Transformers}
\label{sec:vision}


Do our findings extend to other domains like computer vision? Vision models employ diverse architectures and training objectives from fully supervised learning to self-supervised methods, and from bidirectional to autoregressive encoders. Their diversity provides an ideal testbed to examine how well our findings generalize and how different training objectives shape internal representations.



We examine several representative vision approaches: \textbf{ViT}~\citep{vit}, a \emph{supervised} transformer trained on labeled data; \textbf{CLIP}~\citep{radford2021learning}, a \emph{weakly supervised} image encoder; \textbf{BEiT}~\citep{beit}, a \emph{self-supervised} encoder that reconstructs masked patches; \textbf{DINOv2}~\citep{dinov2}, a self-supervised approach leveraging augmentations and exponential moving average teachers; \textbf{MAE}~\citep{mae}, a self-supervised approach that reconstructs images from masked patches; \textbf{AIM}~\citep{aim}, an \emph{autoregressive} transformer that predicts the next patch in an image sequence (GPT-style next-token prediction); and \textbf{AIMv2}~\citep{aimv2}, which extends AIM with a multimodal next-token prediction task. In Figure~\ref{fig:vision-model-performances}, we evaluate every model layer on ImageNet-1k with attention probing and our suite of metrics.

\paragraph{AIM exhibits behavior similar to language models.} 

AIM, which predicts image patches sequentially, exhibits the same entropy "valley" and accuracy peak at intermediate layers that we observed in language models like Pythia. This pattern suggests that autoregressive training, whether over text tokens or image patches, consistently creates a mid-depth information bottleneck. The sequential prediction constraint forces models to compress non-local contextual information early in processing, then selectively re-expand the most relevant features for accurate prediction. AIM's strong intermediate performance was first noted in~\cite{aim}.  Interestingly, while the AIMv2 model does not show improved intermediate accuracy, it still produces an entropy valley. We hypothesize this difference is due to the multimodal text-vision pretext task, which may alter information compression dynamics.

\paragraph{Vision transformers behave differently from language models.} 
All models except for AIM exhibit strictly increasing downstream accuracy toward final layers. Similar trends have been shown for ResNets~\cite{haim-neural-geometry-fewshot-learning}, where few-shot classification error is strictly decreasing across layers. Most non-autoregressive vision models show steadily \emph{increasing} dataset entropy. The notable exception is BEIT, which exhibits a substantial intermediate dip. Taken together, the results suggest that without an autoregressive objective, vision transformers have less need for drastic transformations at mid-depth.

\paragraph{Autoregression as the driving factor.}
The strong mid-layer compression observed in LLMs seems to be not purely a property of “sequential token data” vs.\ “image patch data,” but rather a byproduct of pretraining. While various self-supervised (or fully supervised) objectives in vision foster more uniform feature building across layers, autoregressive vision models develop similar  mid-layer bottlenecks that we see in language. Thus, the objective design--whether or not a model is autoregressive---appears crucial in shaping layer-wise representation quality, regardless of domain.

\section{Discussion and Conclusion}
\label{sec:discussion_conclusion}

We investigated the representation quality of intermediate layers in LLMs and their role in downstream task performance. We introduced a unified framework of evaluation metrics, establish theoretical connections among them, and apply these metrics to analyze transformer-based architectures, SSMs, and vision models. A key phenomenon unveiled by prompt entropy was an information bottleneck in the middle layers of autoregressive transformers in both vision and language domains. Furthermore, we show that intermediate layers often surpass final layers in representation quality, holding implications for feature relevance and extraction. DiME, curvature, and infoNCE correlate well with downstream performance, suggesting a fundamental connection between representation and generalizability. 

In conclusion, our work studies the internal representation dynamics in LLMs, offering theoretical and empirical insights as well as practical implications for optimizing model design and training strategies. Future work should further investigate the underlying causes of intermediate layer compression and do explicit finetuning to control compression.

\section*{Impact Statement}


Our paper studies the inner workings of large language models with findings that may challenge typical assumptions about the importance of intermediate layers in large language models and the representations they learn. Our findings suggest that representations from these layers can yield better performance on a variety of downstream tasks, which can have implications for model interpretability, robustness, and efficiency.

From an ethical standpoint, the ability to leverage intermediate-layer representations could impact fairness and bias considerations in evaluating model performance or in model deployment. By helping better identify latent features and representations, our approach may amplify latent biases. We welcome and encourage future work to explore methods that can ensure that intermediate-layer representations do not disproportionately reinforce biases or lead to unintended disparities in real-world applications.

\section*{Acknowledgements}
We thank the anonymous reviewers for their valuable feedback, which helped improve the clarity and presentation of our results. We are also grateful to Diego Doimo, Artemii Novoselov, Jhoan Keider Hoyos Osorio, Luis Sanchez, and Matteo Saponati (listed alphabetically) for fruitful discussions and helpful pointers to related literature. Oscar Skean is supported by the Office of the Under Secretary of Defense for Research and Engineering under award number FA9550-21-1-0227.


\bibliographystyle{icml2025.bst}
\bibliography{strings, references}

\begin{thebibliography}{71}
\providecommand{\natexlab}[1]{#1}
\providecommand{\url}[1]{\texttt{#1}}
\expandafter\ifx\csname urlstyle\endcsname\relax
  \providecommand{\doi}[1]{doi: #1}\else
  \providecommand{\doi}{doi: \begingroup \urlstyle{rm}\Url}\fi

\bibitem[Agrawal et~al.(2022)Agrawal, Mondal, Ghosh, and Richards]{agrawal2022alphareq}
Agrawal, K.~K., Mondal, A.~K., Ghosh, A., and Richards, B.
\newblock $\alpha$-{ReQ}: Assessing representation quality in self-supervised learning by measuring eigenspectrum decay.
\newblock \emph{NeurIPs}, 2022.

\bibitem[Alain \& Bengio(2017)Alain and Bengio]{alain2016understanding}
Alain, G. and Bengio, Y.
\newblock Understanding intermediate layers using linear classifier probes.
\newblock \emph{ICLR}, 2017.

\bibitem[Arefin et~al.(2025)Arefin, Subbaraj, Gontier, LeCun, Rish, Shwartz-Ziv, and Pal]{seqvcr}
Arefin, M.~R., Subbaraj, G., Gontier, N., LeCun, Y., Rish, I., Shwartz-Ziv, R., and Pal, C.
\newblock {Seq-VCR}: Preventing collapse in intermediate transformer representations for enhanced reasoning.
\newblock \emph{ICLR}, 2025.

\bibitem[Bach(2022)]{bach2022information}
Bach, F.
\newblock Information theory with kernel methods.
\newblock \emph{IEEE Transactions on Information Theory}, 2022.

\bibitem[Bao et~al.(2022)Bao, Dong, Piao, and Wei]{beit}
Bao, H., Dong, L., Piao, S., and Wei, F.
\newblock {BeIT}: Bert pre-training of image transformers.
\newblock \emph{ICLR}, 2022.

\bibitem[Barbero et~al.(2025)Barbero, Arroyo, Gu, Perivolaropoulos, Bronstein, Veli{\v{c}}kovi{\'c}, and Pascanu]{first-token-attending}
Barbero, F., Arroyo, A., Gu, X., Perivolaropoulos, C., Bronstein, M., Veli{\v{c}}kovi{\'c}, P., and Pascanu, R.
\newblock Why do {LLMs} attend to the first token?
\newblock \emph{arXiv}, 2025.

\bibitem[Behnam~Ghader et~al.(2024)Behnam~Ghader, Adlakha, Mosbach, Bahdanau, Chapados, and Reddy]{behnamghader2024llm2vec}
Behnam~Ghader, P., Adlakha, V., Mosbach, M., Bahdanau, D., Chapados, N., and Reddy, S.
\newblock {LLM2Vec}: Large language models are secretly powerful text encoders.
\newblock \emph{COLM}, 2024.

\bibitem[Biderman et~al.(2023)Biderman, Schoelkopf, Anthony, Bradley, O’Brien, Hallahan, Khan, Purohit, Prashanth, Raff, et~al.]{pythia}
Biderman, S., Schoelkopf, H., Anthony, Q.~G., Bradley, H., O’Brien, K., Hallahan, E., Khan, M.~A., Purohit, S., Prashanth, U.~S., Raff, E., et~al.
\newblock Pythia: A suite for analyzing large language models across training and scaling.
\newblock \emph{ICML}, 2023.

\bibitem[Boes et~al.(2019)Boes, Eisert, Gallego, M{\"u}ller, and Wilming]{boes2019neumann}
Boes, P., Eisert, J., Gallego, R., M{\"u}ller, M.~P., and Wilming, H.
\newblock Von neumann entropy from unitarity.
\newblock \emph{Physical review letters}, 2019.

\bibitem[Bordes et~al.(2023)Bordes, Balestriero, Garrido, Bardes, and Vincent]{bordes2022guillotine}
Bordes, F., Balestriero, R., Garrido, Q., Bardes, A., and Vincent, P.
\newblock Guillotine regularization: Why removing layers is needed to improve generalization in self-supervised learning.
\newblock \emph{TMLR}, 2023.

\bibitem[Brown et~al.(2020)Brown, Mann, Ryder, Subbiah, Kaplan, Dhariwal, Neelakantan, Shyam, Sastry, Askell, Agarwal, Herbert-Voss, Krueger, Henighan, Child, Ramesh, Ziegler, Wu, Winter, Hesse, Chen, Sigler, Litwin, Gray, Chess, Clark, Berner, McCandlish, Radford, Sutskever, and Amodei]{gpt3}
Brown, T., Mann, B., Ryder, N., Subbiah, M., Kaplan, J.~D., Dhariwal, P., Neelakantan, A., Shyam, P., Sastry, G., Askell, A., Agarwal, S., Herbert-Voss, A., Krueger, G., Henighan, T., Child, R., Ramesh, A., Ziegler, D., Wu, J., Winter, C., Hesse, C., Chen, M., Sigler, E., Litwin, M., Gray, S., Chess, B., Clark, J., Berner, C., McCandlish, S., Radford, A., Sutskever, I., and Amodei, D.
\newblock Language models are few-shot learners.
\newblock \emph{NeurIPs}, 2020.

\bibitem[Brunner et~al.(2020)Brunner, Liu, Pascual, Richter, Ciaramita, and Wattenhofer]{identifiability}
Brunner, G., Liu, Y., Pascual, D., Richter, O., Ciaramita, M., and Wattenhofer, R.
\newblock On identifiability in transformers.
\newblock \emph{ICLR}, 2020.

\bibitem[Burns et~al.(2023)Burns, Ye, Klein, and Steinhardt]{burns2022dl}
Burns, C., Ye, H., Klein, D., and Steinhardt, J.
\newblock Discovering latent knowledge in language models without supervision.
\newblock \emph{ICLR}, 2023.

\bibitem[Chen et~al.(2020)Chen, Radford, Child, Wu, Jun, Luan, and Sutskever]{pixelgpt}
Chen, M., Radford, A., Child, R., Wu, J., Jun, H., Luan, D., and Sutskever, I.
\newblock Generative pretraining from pixels.
\newblock \emph{ICML}, 2020.

\bibitem[Cheng et~al.(2025)Cheng, Doimo, Kervadec, Macocco, Yu, Laio, and Baroni]{doimo-abstraction-phase}
Cheng, E., Doimo, D., Kervadec, C., Macocco, I., Yu, J., Laio, A., and Baroni, M.
\newblock Emergence of a high-dimensional abstraction phase in language transformers.
\newblock \emph{ICLR}, 2025.

\bibitem[Csord{\'a}s et~al.(2025)Csord{\'a}s, Manning, and Potts]{llm-depth-residuals}
Csord{\'a}s, R., Manning, C.~D., and Potts, C.
\newblock Do language models use their depth efficiently?
\newblock \emph{arXiv}, 2025.

\bibitem[DeepSeek-AI(2025)]{deepseek}
DeepSeek-AI.
\newblock Deepseek-{R1}: Incentivizing reasoning capability in {LLMs} via reinforcement learning.
\newblock \emph{arXiv}, 2025.

\bibitem[Deletang et~al.(2024)Deletang, Ruoss, Duquenne, Catt, Genewein, Mattern, Grau-Moya, Wenliang, Aitchison, Orseau, et~al.]{deletanglanguage}
Deletang, G., Ruoss, A., Duquenne, P.-A., Catt, E., Genewein, T., Mattern, C., Grau-Moya, J., Wenliang, L.~K., Aitchison, M., Orseau, L., et~al.
\newblock Language modeling is compression.
\newblock \emph{ICLR}, 2024.

\bibitem[Devlin et~al.(2019)Devlin, Chang, Lee, and Toutanova]{devlin2018bert}
Devlin, J., Chang, M.-W., Lee, K., and Toutanova, K.
\newblock {BERT}: Pre-training of deep bidirectional transformers for language understanding.
\newblock \emph{NAACL}, 2019.

\bibitem[Dosovitskiy et~al.(2021)Dosovitskiy, Beyer, Kolesnikov, Weissenborn, Zhai, Unterthiner, Dehghani, Minderer, Heigold, Gelly, et~al.]{vit}
Dosovitskiy, A., Beyer, L., Kolesnikov, A., Weissenborn, D., Zhai, X., Unterthiner, T., Dehghani, M., Minderer, M., Heigold, G., Gelly, S., et~al.
\newblock An image is worth 16x16 words: Transformers for image recognition at scale.
\newblock \emph{ICLR}, 2021.

\bibitem[Dubey et~al.(2024)Dubey, Jauhri, Pandey, Kadian, Al-Dahle, Letman, Mathur, Schelten, Yang, Fan, et~al.]{llama3}
Dubey, A., Jauhri, A., Pandey, A., Kadian, A., Al-Dahle, A., Letman, A., Mathur, A., Schelten, A., Yang, A., Fan, A., et~al.
\newblock The {Llama 3} herd of models.
\newblock \emph{arXiv}, 2024.

\bibitem[El-Nouby et~al.(2024)El-Nouby, Klein, Zhai, Bautista, Toshev, Shankar, Susskind, and Joulin]{aim}
El-Nouby, A., Klein, M., Zhai, S., Bautista, M.~A., Toshev, A., Shankar, V., Susskind, J.~M., and Joulin, A.
\newblock Scalable pre-training of large autoregressive image models.
\newblock \emph{ICML}, 2024.

\bibitem[Fan et~al.(2024)Fan, Jiang, Li, Meng, Han, Shang, Sun, Wang, and Wang]{fan2024notalllayers}
Fan, S., Jiang, X., Li, X., Meng, X., Han, P., Shang, S., Sun, A., Wang, Y., and Wang, Z.
\newblock Not all layers of {LLMs} are necessary during inference.
\newblock \emph{arXiv}, 2024.

\bibitem[Fini et~al.(2025)Fini, Shukor, Li, Dufter, Klein, Haldimann, Aitharaju, da~Costa, B{\'e}thune, Gan, et~al.]{aimv2}
Fini, E., Shukor, M., Li, X., Dufter, P., Klein, M., Haldimann, D., Aitharaju, S., da~Costa, V. G.~T., B{\'e}thune, L., Gan, Z., et~al.
\newblock Multimodal autoregressive pre-training of large vision encoders.
\newblock \emph{CVPR}, 2025.

\bibitem[Garrido et~al.(2023)Garrido, Balestriero, Najman, and Lecun]{garrido2023rankme}
Garrido, Q., Balestriero, R., Najman, L., and Lecun, Y.
\newblock {RankMe}: Assessing the downstream performance of pretrained self-supervised representations by their rank.
\newblock \emph{ICML}, 2023.

\bibitem[Giraldo et~al.(2014)Giraldo, Rao, and Principe]{giraldo2014measures}
Giraldo, L. G.~S., Rao, M., and Principe, J.~C.
\newblock Measures of entropy from data using infinitely divisible kernels.
\newblock \emph{IEEE Transactions on Information Theory}, 2014.

\bibitem[Gu \& Dao(2024)Gu and Dao]{mamba}
Gu, A. and Dao, T.
\newblock Mamba: Linear-time sequence modeling with selective state spaces.
\newblock \emph{COLM}, 2024.

\bibitem[Gu et~al.(2025)Gu, Pang, Du, Liu, Zhang, Du, Wang, and Lin]{gu2024attention}
Gu, X., Pang, T., Du, C., Liu, Q., Zhang, F., Du, C., Wang, Y., and Lin, M.
\newblock When attention sink emerges in language models: An empirical view.
\newblock \emph{ICLR}, 2025.

\bibitem[Gurnee \& Tegmark(2023)Gurnee and Tegmark]{gurnee2023language}
Gurnee, W. and Tegmark, M.
\newblock Language models represent space and time.
\newblock \emph{arXiv}, 2023.

\bibitem[Hao et~al.(2024)Hao, Sukhbaatar, Su, Li, Hu, Weston, and Tian]{hao2024training}
Hao, S., Sukhbaatar, S., Su, D., Li, X., Hu, Z., Weston, J., and Tian, Y.
\newblock Training large language models to reason in a continuous latent space.
\newblock \emph{arXiv}, 2024.

\bibitem[He et~al.(2022)He, Chen, Xie, Li, Doll{\'a}r, and Girshick]{mae}
He, K., Chen, X., Xie, S., Li, Y., Doll{\'a}r, P., and Girshick, R.
\newblock Masked autoencoders are scalable vision learners.
\newblock \emph{CVPR}, 2022.

\bibitem[Hosseini \& Fedorenko(2023)Hosseini and Fedorenko]{hosseini2024curvature}
Hosseini, E. and Fedorenko, E.
\newblock Large language models implicitly learn to straighten neural sentence trajectories to construct a predictive representation of natural language.
\newblock \emph{NeurIPs}, 2023.

\bibitem[Jin et~al.(2024)Jin, Yu, Huang, Zeng, Wang, Hua, Zhao, Mei, Meng, Ding, et~al.]{jin2024conceptdepth}
Jin, M., Yu, Q., Huang, J., Zeng, Q., Wang, Z., Hua, W., Zhao, H., Mei, K., Meng, Y., Ding, K., et~al.
\newblock Exploring concept depth: How large language models acquire knowledge at different layers?
\newblock \emph{arXiv}, 2024.

\bibitem[Lad et~al.(2024)Lad, Gurnee, and Tegmark]{lad2024remarkable}
Lad, V., Gurnee, W., and Tegmark, M.
\newblock The remarkable robustness of {LLMs}: Stages of inference?
\newblock \emph{arXiv}, 2024.

\bibitem[Li et~al.(2022)Li, Choi, Chung, Kushman, Schrittwieser, Leblond, Eccles, Keeling, Gimeno, Dal~Lago, et~al.]{alphacode}
Li, Y., Choi, D., Chung, J., Kushman, N., Schrittwieser, J., Leblond, R., Eccles, T., Keeling, J., Gimeno, F., Dal~Lago, A., et~al.
\newblock Competition-level code generation with alphacode.
\newblock \emph{Science}, 2022.

\bibitem[Liu et~al.(2019)Liu, Gardner, Belinkov, Peters, and Smith]{liu2019linguistic}
Liu, N.~F., Gardner, M., Belinkov, Y., Peters, M.~E., and Smith, N.~A.
\newblock Linguistic knowledge and transferability of contextual representations.
\newblock \emph{NAACL}, 2019.

\bibitem[Ma(2019)]{ma2019nlpaug}
Ma, E.
\newblock {NLP Augmentation}, 2019.
\newblock URL \url{https://github.com/makcedward/nlpaug}.

\bibitem[Mallen \& Belrose(2024)Mallen and Belrose]{mallen2024eliciting}
Mallen, A.~T. and Belrose, N.
\newblock Eliciting latent knowledge from quirky language models.
\newblock \emph{ICLR 2024 Workshop on Mathematical and Empirical Understanding of Foundation Models}, 2024.

\bibitem[Mamou et~al.(2020)Mamou, Le, Del~Rio, Stephenson, Tang, Kim, and Chung]{mamou2020emergence}
Mamou, J., Le, H., Del~Rio, M.~A., Stephenson, C., Tang, H., Kim, Y., and Chung, S.
\newblock Emergence of separable manifolds in deep language representations.
\newblock \emph{ICML}, 2020.

\bibitem[Marion et~al.(2024)Marion, Wu, Sander, and Biau]{marion2024implicit}
Marion, P., Wu, Y.-H., Sander, M.~E., and Biau, G.
\newblock Implicit regularization of deep residual networks towards neural odes.
\newblock \emph{ICLR}, 2024.

\bibitem[Merity et~al.(2017)Merity, Xiong, Bradbury, and Socher]{merity2016pointer}
Merity, S., Xiong, C., Bradbury, J., and Socher, R.
\newblock Pointer sentinel mixture models.
\newblock \emph{ICLR}, 2017.

\bibitem[Muennighoff et~al.(2022)Muennighoff, Tazi, Magne, and Reimers]{muennighoff2022mteb}
Muennighoff, N., Tazi, N., Magne, L., and Reimers, N.
\newblock {MTEB}: Massive text embedding benchmark.
\newblock \emph{EACL}, 2022.

\bibitem[Oord et~al.(2018)Oord, Li, and Vinyals]{oord2018representation}
Oord, A. v.~d., Li, Y., and Vinyals, O.
\newblock Representation learning with contrastive predictive coding.
\newblock \emph{ICLR}, 2018.

\bibitem[Oquab et~al.(2024)Oquab, Darcet, Moutakanni, Vo, Szafraniec, Khalidov, Fernandez, Haziza, Massa, El-Nouby, et~al.]{dinov2}
Oquab, M., Darcet, T., Moutakanni, T., Vo, H., Szafraniec, M., Khalidov, V., Fernandez, P., Haziza, D., Massa, F., El-Nouby, A., et~al.
\newblock {DINOv2}: Learning robust visual features without supervision.
\newblock \emph{TMLR}, 2024.

\bibitem[Park et~al.(2024{\natexlab{a}})Park, Choe, Jiang, and Veitch]{park2024geometry}
Park, K., Choe, Y.~J., Jiang, Y., and Veitch, V.
\newblock The geometry of categorical and hierarchical concepts in large language models.
\newblock \emph{ICML 2024 Workshop on Mechanistic Interpretability}, 2024{\natexlab{a}}.

\bibitem[Park et~al.(2024{\natexlab{b}})Park, Choe, and Veitch]{parklinear2024}
Park, K., Choe, Y.~J., and Veitch, V.
\newblock The linear representation hypothesis and the geometry of large language models.
\newblock \emph{ICML}, 2024{\natexlab{b}}.

\bibitem[Radford et~al.(2021)Radford, Kim, Hallacy, Ramesh, Goh, Agarwal, Sastry, Askell, Mishkin, Clark, et~al.]{radford2021learning}
Radford, A., Kim, J.~W., Hallacy, C., Ramesh, A., Goh, G., Agarwal, S., Sastry, G., Askell, A., Mishkin, P., Clark, J., et~al.
\newblock Learning transferable visual models from natural language supervision.
\newblock \emph{ICML}, 2021.

\bibitem[Raghu et~al.(2017)Raghu, Gilmer, Yosinski, and Sohl-Dickstein]{raghu2017svcca}
Raghu, M., Gilmer, J., Yosinski, J., and Sohl-Dickstein, J.
\newblock {SVCCA}: Singular vector canonical correlation analysis for deep learning dynamics and interpretability.
\newblock \emph{NeurIPs}, 2017.

\bibitem[Razzhigaev et~al.(2024)Razzhigaev, Mikhalchuk, Goncharova, Oseledets, Dimitrov, and Kuznetsov]{anisotropy}
Razzhigaev, A., Mikhalchuk, M., Goncharova, E., Oseledets, I., Dimitrov, D., and Kuznetsov, A.
\newblock The shape of learning: Anisotropy and intrinsic dimensions in transformer-based models.
\newblock \emph{EACL}, 2024.

\bibitem[R{\'e}nyi(1961)]{renyi1961measures}
R{\'e}nyi, A.
\newblock On measures of entropy and information.
\newblock \emph{Proceedings of the fourth Berkeley symposium on mathematical statistics and probability}, 1961.

\bibitem[Roy \& Vetterli(2007)Roy and Vetterli]{effective-rank}
Roy, O. and Vetterli, M.
\newblock The effective rank: A measure of effective dimensionality.
\newblock \emph{European signal processing conference}, 2007.

\bibitem[Saponati et~al.(2025)Saponati, Sager, Aceituno, Stadelmann, and Grewe]{saponati2025underlying}
Saponati, M., Sager, P., Aceituno, P.~V., Stadelmann, T., and Grewe, B.
\newblock The underlying structures of self-attention: symmetry, directionality, and emergent dynamics in transformer training.
\newblock \emph{arXiv}, 2025.

\bibitem[Scholkopf \& Smola(2018)Scholkopf and Smola]{scholkopf2018learning}
Scholkopf, B. and Smola, A.~J.
\newblock \emph{Learning with kernels: support vector machines, regularization, optimization, and beyond}.
\newblock MIT press, 2018.

\bibitem[Shwartz-Ziv(2022)]{shwartz2022information}
Shwartz-Ziv, R.
\newblock \emph{Information flow in deep neural networks}.
\newblock PhD thesis, Hebrew University, 2022.

\bibitem[Shwartz-Ziv \& Tishby(2019)Shwartz-Ziv and Tishby]{shwartz2017opening}
Shwartz-Ziv, R. and Tishby, N.
\newblock Opening the black box of deep neural networks via information.
\newblock \emph{Entropy}, 2019.

\bibitem[Shwartz-Ziv et~al.(2023)Shwartz-Ziv, Balestriero, Kawaguchi, Rudner, and LeCun]{shwartz2023information}
Shwartz-Ziv, R., Balestriero, R., Kawaguchi, K., Rudner, T.~G., and LeCun, Y.
\newblock An information theory perspective on variance-invariance-covariance regularization.
\newblock \emph{NeurIPs}, 2023.

\bibitem[Skean et~al.(2023)Skean, Osorio, Brockmeier, and Giraldo]{skean2023dime}
Skean, O., Osorio, J. K.~H., Brockmeier, A.~J., and Giraldo, L. G.~S.
\newblock {DiME}: Maximizing mutual information by a difference of matrix-based entropies.
\newblock \emph{arXiv}, 2023.

\bibitem[Skean et~al.(2024)Skean, Dhakal, Jacobs, and Giraldo]{skean2024frossl}
Skean, O., Dhakal, A., Jacobs, N., and Giraldo, L. G.~S.
\newblock {FroSSL}: Frobenius norm minimization for self-supervised learning.
\newblock \emph{ECCV}, 2024.

\bibitem[Sorscher et~al.(2022)Sorscher, Ganguli, and Sompolinsky]{haim-neural-geometry-fewshot-learning}
Sorscher, B., Ganguli, S., and Sompolinsky, H.
\newblock Neural representational geometry underlies few-shot concept learning.
\newblock \emph{Proceedings of the National Academy of Sciences}, 2022.

\bibitem[Tenney et~al.(2019)Tenney, Das, and Pavlick]{tenney2019bert}
Tenney, I., Das, D., and Pavlick, E.
\newblock {BERT} rediscovers the classical nlp pipeline.
\newblock \emph{NAACL}, 2019.

\bibitem[Thilak et~al.(2024)Thilak, Huang, Saremi, Dinh, Goh, Nakkiran, Susskind, and Littwin]{thilak2023lidar}
Thilak, V., Huang, C., Saremi, O., Dinh, L., Goh, H., Nakkiran, P., Susskind, J.~M., and Littwin, E.
\newblock {LiDAR}: Sensing linear probing performance in joint embedding ssl architectures.
\newblock \emph{ICLR}, 2024.

\bibitem[Tian et~al.(2020)Tian, Krishnan, and Isola]{cmc-imagenet100}
Tian, Y., Krishnan, D., and Isola, P.
\newblock Contrastive multiview coding.
\newblock \emph{ECCV}, 2020.

\bibitem[Touvron et~al.(2023)Touvron, Martin, Stone, Albert, Almahairi, Babaei, Bashlykov, Batra, Bhargava, Bhosale, et~al.]{llama2}
Touvron, H., Martin, L., Stone, K., Albert, P., Almahairi, A., Babaei, Y., Bashlykov, N., Batra, S., Bhargava, P., Bhosale, S., et~al.
\newblock Llama 2: Open foundation and fine-tuned chat models.
\newblock \emph{arXiv}, 2023.

\bibitem[Valeriani et~al.(2023)Valeriani, Doimo, Cuturello, Laio, Ansuini, and Cazzaniga]{doimo-hidden-representations}
Valeriani, L., Doimo, D., Cuturello, F., Laio, A., Ansuini, A., and Cazzaniga, A.
\newblock The geometry of hidden representations of large transformer models.
\newblock \emph{NeurIPs}, 2023.

\bibitem[Vaswani et~al.(2017)Vaswani, Shazeer, Parmar, Uszkoreit, Jones, Gomez, Kaiser, and Polosukhin]{vaswani2017attention}
Vaswani, A., Shazeer, N., Parmar, N., Uszkoreit, J., Jones, L., Gomez, A.~N., Kaiser, L.~u., and Polosukhin, I.
\newblock Attention is all you need.
\newblock \emph{NeurIPs}, 2017.

\bibitem[Voita et~al.(2019)Voita, Sennrich, and Titov]{voita2019bottom}
Voita, E., Sennrich, R., and Titov, I.
\newblock The bottom-up evolution of representations in the transformer: A study with machine translation and language modeling objectives.
\newblock \emph{EMNLP-IJCNLP}, 2019.

\bibitem[Wei et~al.(2024)Wei, Tan, Li, Wang, and Huang]{wei2024large}
Wei, L., Tan, Z., Li, C., Wang, J., and Huang, W.
\newblock {Diff-eRank}: A novel rank-based metric for evaluating large language models.
\newblock \emph{NeurIPs}, 2024.

\bibitem[Xiao et~al.(2024)Xiao, Tian, Chen, Han, and Lewis]{attention-sinks}
Xiao, G., Tian, Y., Chen, B., Han, S., and Lewis, M.
\newblock Efficient streaming language models with attention sinks.
\newblock \emph{ICLR}, 2024.

\bibitem[Yang et~al.(2024)Yang, Yang, Zhang, Hui, Zheng, Yu, Li, Liu, Huang, Wei, Lin, Yang, Tu, Zhang, Yang, Yang, Zhou, Lin, Dang, Lu, Bao, Yang, Yu, Li, Xue, Zhang, Zhu, Men, Lin, Li, Xia, Ren, Ren, Fan, Su, Zhang, Wan, Liu, Cui, Zhang, and Qiu]{qwen2.5}
Yang, A., Yang, B., Zhang, B., Hui, B., Zheng, B., Yu, B., Li, C., Liu, D., Huang, F., Wei, H., Lin, H., Yang, J., Tu, J., Zhang, J., Yang, J., Yang, J., Zhou, J., Lin, J., Dang, K., Lu, K., Bao, K., Yang, K., Yu, L., Li, M., Xue, M., Zhang, P., Zhu, Q., Men, R., Lin, R., Li, T., Xia, T., Ren, X., Ren, X., Fan, Y., Su, Y., Zhang, Y., Wan, Y., Liu, Y., Cui, Z., Zhang, Z., and Qiu, Z.
\newblock Qwen2.5 technical report.
\newblock \emph{arXiv}, 2024.

\bibitem[Zhao et~al.(2024)Zhao, Ziser, and Cohen]{other-layer-by-layer}
Zhao, Z., Ziser, Y., and Cohen, S.~B.
\newblock Layer by layer: Uncovering where multi-task learning happens in instruction-tuned large language models.
\newblock \emph{EMNLP-IJCNLP}, 2024.

\bibitem[Zhouyin \& Liu(2021)Zhouyin and Liu]{zhouyin2021understanding}
Zhouyin, Z. and Liu, D.
\newblock Understanding neural networks with logarithm determinant entropy estimator.
\newblock \emph{arXiv}, 2021.

\end{thebibliography}

\newpage
\clearpage
 \appendix

\section{Architectural Details}
\label{appendix:architectures}

In this section, we elaborate on the specific architectures of transformers and State Space Models (SSMs). We outline the mathematical foundations, including the weight matrices, attention mechanisms for transformers, and the state transition matrices for SSMs. Detailed equations and parameter configurations are provided to facilitate replication and deeper understanding.

\subsection{Transformer}
The transformer architecture \citep{vaswani2017attention} utilizes self-attention mechanisms. Given an input $\mathbf{x}$, the key ($\mathbf{K}$), query ($\mathbf{Q}$), and value ($\mathbf{V}$) matrices are computed as:

\begin{equation}
    \mathbf{Q} = \mathbf{x}\mathbf{W}_Q, \quad \mathbf{K} = \mathbf{x}\mathbf{W}_K, \quad \mathbf{V} = \mathbf{x}\mathbf{W}_V,
\end{equation}

where $\mathbf{W}_Q, \mathbf{W}_K \in \mathbb{R}^{d \times d_k}$ and $\mathbf{W}_V \in \mathbb{R}^{d \times d_v}$ are learned weights.

The attention weights are calculated using:

\begin{equation}
    \mathbf{A} = \operatorname{softmax}\left(\frac{\mathbf{Q}\mathbf{K}^\top}{\sqrt{d_k}} + \mathbf{M}\right),
\end{equation}

where $\mathbf{M}$ is a mask to enforce causality in autoregressive tasks.

The output is then:

\begin{equation}
    \mathbf{y} = \mathbf{A}\mathbf{V}.
\end{equation}

\subsection{State Space Models}
\label{sec:ssm}

SSMs \citep{mamba} model sequences using recurrent dynamics. The hidden state $\mathbf{h}_t$ and output $\mathbf{y}_t$ at time $t$ are updated as:

\begin{align}
    \mathbf{h}_t &= \mathbf{A}\mathbf{h}_{t-1} + \mathbf{B}\mathbf{x}_t, \\
    \mathbf{y}_t &= \mathbf{C}\mathbf{h}_t + \mathbf{D}\mathbf{x}_t,
\end{align}

where $\mathbf{A} \in \mathbb{R}^{n \times n}$, $\mathbf{B} \in \mathbb{R}^{n \times d}$, $\mathbf{C} \in \mathbb{R}^{d \times n}$, and $\mathbf{D} \in \mathbb{R}^{d \times d}$ are learned parameters.

\section{Discussion on Prompt Entropy}
\label{sect:appendix-prompt-entropy}

\begin{figure}[!b]
  \begin{center}
      \includegraphics[width=\linewidth]{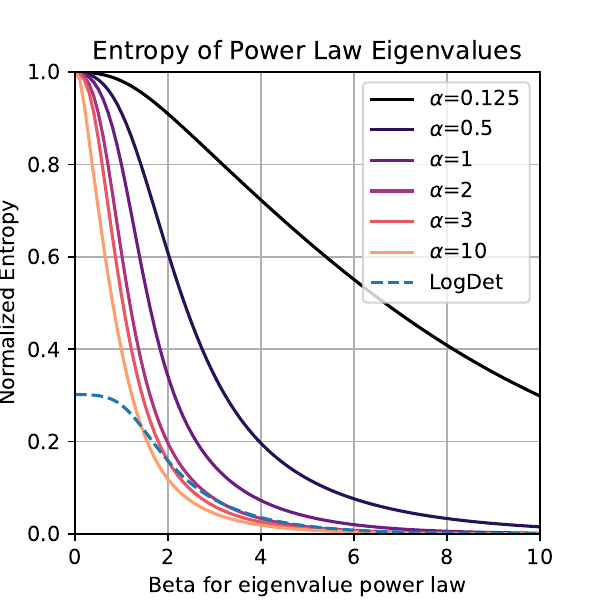}
  \end{center}
  \caption{The behavior of Eq. \ref{eq:matrix-based-entropy} for varying values of $\alpha$ on Gram matrices with eigenvalues distributed with a $\beta$-power law such that $\lambda_i = i^{-\beta}$.}
  \label{fig:power_law_entropy}
\end{figure}

The first measure of token embedding diversity we call prompt entropy. This entropy is measured on the intermediate tokens and captures how diverse the token representations are.

We follow the work of \cite{wei2024large} and use $\alpha$-order matrix-based entropy \cite{giraldo2014measures, skean2023dime, skean2024frossl}, which serves as a tractable surrogate for traditional Rényi’s $\alpha$-order entropy \cite{renyi1961measures}. The quantity is calculated using a similarity kernel $\kappa$ on a batch of samples drawn from a distribution, without making explicit assumptions on what the true distribution is. The choice of kernel $\kappa$ is flexible and can be any infinitely divisible kernel such as the Gaussian kernel, linear kernel, or Laplacian kernel, among others. For this work, we restrict ourselves to the linear kernel $\kappa(a, b) = a b^T$. This choice is motivated by the linear representation hypothesis \cite{parklinear2024} which finds that large language model representations encode high-level concepts such as truth \cite{burns2022dl}, honesty \cite{mallen2024eliciting}, and part-of-speech \cite{mamou2020emergence} in linearly separable manifolds.

 The equation for matrix-based entropy was previously defined in Eq. \ref{eq:matrix-based-entropy}. One way to interpret Eq. \ref{eq:matrix-based-entropy} is as the $\alpha$-order Rényi entropy of the Gram matrix eigenvalues\footnote{The non-zero eigenvalues of the Gram matrix $Z Z^T$ are equivalent to those of the covariance matrix $Z^T Z$. Using the covariance matrix instead of the Gram matrix in Eq. \ref{eq:matrix-based-entropy} makes no difference and is more computationally efficient if $D < N$.}. Notice how each eigenvalue is divided by $\textrm{tr}(\mathbf{K}_{\mathbf{Z}})$ before being raised to the $\alpha$ power. This is so that the eigenvalues of $\mathbf{K}_{\mathbf{Z}}$ sum to one (because  $\textrm{tr}(\cdot) = \sum_{i=1}^n \lambda_i(\cdot)$), which is a necessary condition to treat the eigenvalues as a probability distribution. Futhermore, each eigenvalue of $\mathbf{K}_{\mathbf{Z}}$ signifies the variance of samples in a particular principal component direction~\cite{scholkopf2018learning}. If entropy is low, then the eigenvalues form a heavy-tail distribution which implies that a few components dominate the variance of samples in $Z$. On the other hand, at maximum entropy, the eigenvalues form a uniform distribution and samples are spread equally in all directions. Matrix-based entropy is reminiscent of the LogDet entropy which uses the determinant of $\mathbf{K}_{\mathbf{Z}}$ to capture how much "volume" a dataset occupies~\cite{shwartz2023information, zhouyin2021understanding}. The LogDet entropy is given by $S_{\textrm{LogDet}}(Z) = \log \det (\mathbf{K}_{\mathbf{Z}}) - \log 2$. One can use Jensen's inequality to show that the LogDet entropy is a lower bound of Eq \ref{eq:matrix-based-entropy} when $\lim_{\alpha \rightarrow 1}$ (Appendix J.4 of~\cite{shwartz2023information}).
 
 Depending on the choice of $\alpha$, several special cases of matrix-based entropy can be recovered. In particular, when $\lim_{\alpha \rightarrow 1}$ it equals Shannon entropy (also referred to as von Neumann entropy in quantum information theory \cite{bach2022information, boes2019neumann}), and when $\alpha=2$ it equals collision entropy. Interestingly, the case of $\alpha=2$ can be calculated without explicit eigendecomposition \cite{skean2024frossl}. We show in the Appendix Figure \ref{fig:power_law_entropy} how varying values of $\alpha$ affect the matrix-based entropy of Gram matrices with eigenvalues distributed with a $\beta$-power law such that $\lambda_i = i^{-\beta}$. It is shown that for larger values of $\alpha$, smaller eigenvalues contribute more to the entropy.

\section{Dataset Details}
\label{appendix:dataset-details}
\subsection{Wikitext Dataset}
We used the wikitext dataset \cite{merity2016pointer} for the majority of our experiments in Sections \ref{subsec:arch-scale-diffs} and \ref{fig:chain-of-thought}. This was downloaded from \textbf{Salesforce/wikitext} on huggingface. The dataset consists of 100 million tokens scraped from the Featured articles on wikipedia. We filtered out prompts which were less than 30 tokens or were wikipedia section headings.

\subsection{MTEB}

The 32 tasks we used from the Massive Text Embedding Benchmark (MTEB) are detailed in Table~\ref{tab:mteb_tasks}. They are English language tasks covering clustering, classification, reranking, and sentence-to-sentence.


    


\begin{table*}[!b]
\scriptsize
\centering
\begin{tabular}{p{0.2\textwidth} p{0.5\textwidth}c}
\toprule
\textbf{Task Domain} & \textbf{Tasks} & \textbf{\# Tasks (32 Total)} \\
\midrule
Pair Classification & 
SprintDuplicateQuestions, TwitterSemEval2015, TwitterURLCorpus & 3 \\
\midrule
Classification & 
AmazonCounterfactualClassification, AmazonReviewsClassification, Banking77Classification, EmotionClassification, MTOPDomainClassification, MTOPIntentClassification, MassiveIntentClassification, MassiveScenarioClassification, ToxicConversationsClassification, TweetSentimentExtractionClassification & 10 \\
\midrule
Clustering & 
ArxivClusteringS2S, BiorxivClusteringS2S, MedrxivClusteringS2S, RedditClustering, StackExchangeClustering, TwentyNewsgroupsClustering & 6 \\
\midrule
Reranking & 
AskUbuntuDupQuestions, MindSmallReranking, SciDocsRR, StackOverflowDupQuestions & 4 \\
\midrule
Sentence to Sentence & 
BIOSSES, SICK-R, STS12, STS13, STS14, STS15, STS16, STS17, STSBenchmark & 9 \\
\bottomrule
\end{tabular}
\caption{MTEB Tasks used in experiments covering a wide range of different use-cases and domains.}
\label{tab:mteb_tasks}
\end{table*}

\section{Prompt Augmentations}
\label{appendix:prompt-augmentation}
For the augmentation-invariance metrics such as infoNCE, LiDAR, and DiME, we use the NLPAug library \cite{ma2019nlpaug} to augment our prompts. We use three types of augmentations.

\begin{itemize}
 \item The SplitAug augmentation randomly splits words into two parts by adding a space. 
 \item The RandomCharAug augmentation randomly inserts, substitutes, swaps, or deletes characters.
 \item The Keyboard augmentation randomly substitutes characters with other characters that are at a distance of one as measured on a QWERTY keyboard. For instance, the character "k" may be replaced with "i", "l", "m", or "j".
\end{itemize}

We use the pseudocode below to do our augmentations using three types of augmentations, using the default library settings for each type. When computing augmentation-invariance metrics like infoNCE or DiME, we use the two augmented prompts rather than using one augmented prompt alongside the original prompt. Note that these augmentations may change the token length $T$ of a prompt.

\begin{lstlisting}
    aug = naf.Sequential([
        naw.SplitAug(p=0.3),
        nac.RandomCharAug(p=0.3),
        nac.KeyboardAug(p=0.3),
    ])
    (aug_A, aug_B) = aug.augment(prompt, num_augmentations=2)

    prompt -> "The quick brown fox jumps over the lazy dog."

    aug_A ->  "The quDUk b rown fox wEmps o ver the l azy dog."
    aug_B ->  "The qTuXi bro wn fox uVm)s ob3r the la_k dog."
\end{lstlisting}

\section{Using Evaluation Metrics as a Performance Proxy}
\label{appendix:downstream}

\begin{table*}[t]
\centering
\begin{tabular}{@{}lccccc@{}}
\toprule
\textbf{Model} & \textbf{Supervised (Best)} & \multicolumn{4}{c}{\textbf{Unsupervised}} \\
\cmidrule(lr){3-6}
 & & \textbf{Naive (Last)} & \textbf{min-DiME} & \textbf{min-infoNCE} & \textbf{min-Dataset Entropy} \\
\midrule
Pythia-410M    & 52.0 & 45.5 & \textbf{48.5} & 46.2 & 48.1 \\
LLM2Vec-8B     & 66.3 & 63.9 & 60.0 & \textbf{64.3} & 50.4 \\
\bottomrule
\end{tabular}
\caption{Average MTEB Performance (\%) across Different Layer Selection Schemes}
\label{tab:performance-layers}
\end{table*}

We previously demonstrated strong correlations between our unsupervised evaluation metrics and downstream performance. These correlations can be exploited to select high-performing layers for a given task entirely without supervision, as suggested by prior work~\cite{agrawal2022alphareq, garrido2023rankme, thilak2023lidar}.

In Figure~\ref{tab:performance-layers}, we apply this unsupervised layer selection approach to Pythia-410M and LLM2Vec-8B using the 32-task MTEB benchmark introduced in Section~\ref{sect:downstream-tasks}. Rather than computing task accuracies for every layer, we compute DiME, infoNCE, and dataset entropy for each task across all layers in a single forward pass. For each task, we then select the layer that minimizes one of these metrics—leveraging their negative correlation with downstream performance.

This straightforward yet effective method yields substantial performance improvements with no supervision. For example, DiME-based layer selection boosts the average MTEB score of Pythia-410M by 3\%.

\section{Extreme Prompts}
\label{appendix:extreme-prompts}

\subsection{Increasing Repetition}
We take regular prompts from the wikitext dataset, tokenize them, and then for each token we randomly replace it with probability $p$. We draw replacements tokens by sampling a random token from within the prompt. We show examples below for varying levels of $p$.

\begin{itemize}
    \item ($p = 0$) \hspace{3pt} Mint records indicate the first gold dollars were produced on May 7...
    \item ($p = 0.1$) Mint records indicate the first gold dollars were Mint Mint May 7...
    \item ($p = 0.5$) Mint records Mint Mint Mint gold dollars were Mint Mint Mint 7...
    \item ($p = 1.0$) Mint Mint Mint Mint Mint Mint Mint Mint Mint Mint Mint Mint Mint...
\end{itemize}

\subsection{Increasing Randomness}
We take regular prompts from the wikitext dataset, tokenize them, and then for each token we randomly replace it with probability $p$. We draw replacements uniformly from the tokenizer distribution. We show examples below for varying levels of $p$. Unlike the character-level random noise added to prompts in Section {with random noise discussed in Appendix \ref{appendix:prompt-augmentation} which might change the number of tokens $T$ of the prompt, the token-level random noise used here does not do so.

\begin{itemize}
    \item ($p = 0$) \hspace{3pt} Mint records indicate the first gold dollars were produced on May 7...
    \item ($p = 0.1$) Mint records indicate salivary first gold dollars were produced on May NaCl...
    \item ($p = 0.5$) Mint records Dallas actively first dollars persufors on Mayder129 18...
    \item ($p = 1.0$) arf emulsion minorensteinorianmega\_TOStack potsRecip Installifykeeping...
\end{itemize}

\section{Theorems} \label{appendix:proofs}

\begin{definition}{(Majorization)} Let $p,q \in \mathbb{}{R}^n$ be nonnegative vectors such that $\sum_{i=1}^N p_i = \sum_{i=1}^N q_i$. We say that q majorizes p, denoted by $p \preccurlyeq q$, if their ordered sequences $p_{[1]} \geq \cdots \geq p_{[n]}$ and $q_{[1]}  \geq \cdots \geq q_{[n]}$ satisfy:

\begin{equation}
    \sum_{i=1}^k p_{[i]} \leq  \sum_{i=1}^k q_{[i]} \textrm{\quad for \quad} k = 1, \cdots, n 
\end{equation}
\end{definition}

\begin{definition}{(Schur-Convexity)} A real-valued function $f$ on $\mathbb{R}^n$ is called Schur-convex if $p \preccurlyeq q \implies f(p) \leq f(q)$, and Schur-concave if $p \preccurlyeq q \implies f(q) \leq f(p)$.
\end{definition}

\begin{lemma} 
The matrix-based entropy, as given in Equation~\ref{eq:matrix-based-entropy}, is a Schur-concave function for $\alpha>0$. This result is well-known and, for instance, was recently given by Lemma 4.1 in \citep{giraldo2014measures}.
\end{lemma}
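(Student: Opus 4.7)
The plan is to reduce the claim to a classical statement about Rényi-type entropy on the probability simplex. Observe that $S_\alpha(\mathbf{Z})$ depends on $\mathbf{Z}$ only through the vector of normalized eigenvalues $p_i = \lambda_i(\mathbf{K})/\mathrm{tr}(\mathbf{K})$, which lies in $\Delta_r$. Thus it suffices to show that $H_\alpha(p) = \tfrac{1}{1-\alpha}\log\sum_i p_i^{\alpha}$ is Schur-concave on $\Delta_r$, since majorization of eigenvalue vectors translates directly to majorization of $p$. The proof will then split into two regimes of $\alpha$ handled uniformly by a composition-with-monotone-functions argument.

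First I would analyze the inner sum $g_\alpha(p) = \sum_{i=1}^{r} p_i^{\alpha}$. The function $t \mapsto t^{\alpha}$ is convex on $[0,\infty)$ when $\alpha > 1$ and concave when $0 < \alpha < 1$. By the classical Hardy–Littlewood–Pólya theorem, $p \preccurlyeq q$ implies $\sum_i \varphi(p_i) \le \sum_i \varphi(q_i)$ for any convex $\varphi$, so $g_\alpha$ is Schur-convex for $\alpha > 1$ and Schur-concave for $0 < \alpha < 1$. This is the only substantive inequality invoked in the argument.

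Next I would compose with $\phi_\alpha(u) = \tfrac{1}{1-\alpha}\log u$ and track signs. For $\alpha > 1$, the prefactor $\tfrac{1}{1-\alpha}$ is negative so $\phi_\alpha$ is strictly decreasing; composing a decreasing scalar function with a Schur-convex function yields a Schur-concave function, so $S_\alpha = \phi_\alpha \circ g_\alpha$ is Schur-concave. For $0 < \alpha < 1$, $\phi_\alpha$ is increasing, and composing an increasing function with a Schur-concave function preserves Schur-concavity, giving the same conclusion. The boundary case $\alpha \to 1$ yields $S_1(p) = -\sum_i p_i \log p_i$, whose Schur-concavity follows again from Hardy–Littlewood–Pólya applied to the concave function $t \mapsto -t\log t$ (with the convention $0 \log 0 = 0$).

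The main obstacle is not analytic but bookkeeping: one must carefully track the sign of $1-\alpha$ when composing with $\log$, and handle zero eigenvalues coming from rank deficiency of $\mathbf{K}$ via the standard limiting convention. Since the conclusion is classical and explicitly cited to \citep{giraldo2014measures}, the proof will amount to assembling Hardy–Littlewood–Pólya with the monotone-composition rule for Schur functions rather than any new inequality.
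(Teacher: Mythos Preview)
Your proposal is correct. The paper does not actually supply a proof of this lemma: the statement itself defers to \citep{giraldo2014measures} as a citation for the well-known result, and the appendix moves directly to the next theorem. Your argument---reducing to the normalized eigenvalue vector on the simplex, invoking Hardy--Littlewood--P\'olya to determine the Schur-convexity/concavity of $\sum_i p_i^\alpha$ according to whether $\alpha>1$ or $\alpha<1$, and then tracking the sign of $1/(1-\alpha)$ under composition with $\log$---is exactly the standard route and is sound in each regime, including the $\alpha\to 1$ Shannon case. So you have filled in what the paper leaves as a citation, and there is nothing to contrast.
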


\begin{theorem}
Suppose we have a matrix of embeddings $Z \in \mathbb{R}^{N \times D}$ and its covariance $Z^T Z$. Then the effective rank of $Z$ is an lower bound of $\exp(S_1(Z))$, where $S_1$ denotes the matrix-based entropy of $\alpha=1$.

\end{theorem}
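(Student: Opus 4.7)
The plan is to show $\mathrm{EffRank}(\mathbf{Z}) \le \exp(S_1(\mathbf{Z}))$ by viewing both quantities through the normalized-eigenvalue distribution of the Gram matrix $\mathbf{K} = \mathbf{Z}\mathbf{Z}^\top$ and invoking a single application of Jensen's inequality (equivalently, the monotonicity of Rényi entropy in its order parameter).

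First I would set $p_i = \lambda_i(\mathbf{K})/\mathrm{tr}(\mathbf{K})$ for the nonzero eigenvalues, which forms a probability distribution on the spectral components. Then $S_1(\mathbf{Z})$ is, by the $\alpha \to 1$ limit of Eq.~\ref{eq:matrix-based-entropy}, precisely the Shannon entropy $H(p) = -\sum_i p_i \log p_i$. Next I would identify $\mathrm{EffRank}(\mathbf{Z})$ with the participation-ratio form $\mathrm{tr}(\mathbf{K})^2/\mathrm{tr}(\mathbf{K}^2) = 1/\sum_i p_i^2 = \exp(S_2(\mathbf{Z}))$, the natural matching of an ``effective rank'' heuristic with the matrix-based entropy framework used throughout the paper.

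The key inequality is Jensen applied to the concave $\log$, taking expectation under $p$:
\[
\log\Bigl(\sum_i p_i^2\Bigr) \;=\; \log \mathbb{E}_p[p_i] \;\ge\; \mathbb{E}_p[\log p_i] \;=\; -H(p).
\]
Rearranging yields $-\log(\sum_i p_i^2) \le H(p)$, i.e.\ $S_2(\mathbf{Z}) \le S_1(\mathbf{Z})$, and exponentiating gives $\mathrm{EffRank}(\mathbf{Z}) = \exp(S_2(\mathbf{Z})) \le \exp(S_1(\mathbf{Z}))$, with equality precisely when the nonzero portion of the spectrum is uniform.

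The main obstacle is pinning down the definition of $\mathrm{EffRank}$: the paper cites Roy--Vetterli, whose original formulation applies Shannon entropy to normalized \emph{singular} values rather than to normalized eigenvalues of $\mathbf{K}$. Under the participation-ratio/$S_2$ reading adopted above, the bound is a one-line consequence of Jensen. Under a strict Roy--Vetterli reading one would instead need a direct comparison between $H(\sigma_i/\!\sum_j \sigma_j)$ and $H(\sigma_i^2/\!\sum_j \sigma_j^2)$ on the sorted spectrum, combined with the Schur-concavity of Shannon entropy recalled in the preceding lemma; getting that comparison to fall on the side required by the statement is the delicate part of the argument.
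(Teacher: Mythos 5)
Your argument is internally sound but takes a genuinely different route from the paper, and the difference matters. You read $\mathrm{EffRank}$ as the participation ratio $\mathrm{tr}(\mathbf{K})^2/\mathrm{tr}(\mathbf{K}^2)=\exp(S_2(\mathbf{Z}))$ and deduce the bound from $S_2\le S_1$ (Jensen on $\log$, i.e.\ monotonicity of R\'enyi entropy in the order $\alpha$); that chain is correct as written. The paper instead adopts the Roy--Vetterli reading you flag at the end: it sets $\mathrm{EffRank}(Z)=\exp(H(\sigma))$ for the normalized \emph{singular} values $\sigma_i$ of $Z$, notes $\lambda_i=\sigma_i^2$ for the eigenvalues of $Z^\top Z$, and argues through the majorization/Schur-concavity machinery set up in its preceding definitions and lemma, claiming $\lambda\preccurlyeq\sigma$ and hence $H(\sigma)\le H(\lambda)=S_1(Z)$.

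Your instinct that this second reading is ``the delicate part'' is well founded: the comparison actually falls on the other side. After renormalization the squared spectrum $\lambda_i=\sigma_i^2/\sum_j\sigma_j^2$ is \emph{more} concentrated than $\sigma_i/\sum_j\sigma_j$, so it majorizes it ($\sigma\preccurlyeq\lambda$), and Schur-concavity then gives $H(\lambda)\le H(\sigma)$, i.e.\ $\exp(S_1(Z))\le\mathrm{EffRank}(Z)$ --- the reverse of the stated bound. A two-eigenvalue example makes this concrete: for singular values proportional to $(2,1)$ one gets $H(2/3,1/3)\approx 0.64$ for the effective rank but $H(4/5,1/5)\approx 0.50$ for $S_1$. (The paper's step ``$\sigma_i\le 1\Rightarrow\lambda_i=\sigma_i^2\le\sigma_i$, hence $\lambda\preccurlyeq\sigma$'' cannot be run after both vectors are normalized to sum to one: pointwise domination between two probability vectors forces equality.) So your proof is the one that actually closes, but only under the participation-ratio/$S_2$ definition of effective rank; you should state that choice explicitly, since under the Roy--Vetterli definition the claimed inequality fails.
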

\begin{proof}
    Denote the ordered singular values of $Z$ as $\sigma_1 \geq \cdots \geq \sigma_{\min{(N,D)}} \geq 0$ and the ordered eigenvalues of $Z^T Z$ as $\lambda_1 \geq \cdots \geq \lambda_{\min{(N,D)}} \geq 0$. Without loss of generality, assume that $\sum_{i=1}^N \sigma_i = \sum_{i=1}^N \lambda_i = 1$. If this is not the case, then set $\sigma_i \coloneq \frac{\sigma_i}{\sum_{i=1}^N \sigma_i}$ and $\lambda_i \coloneq \frac{\lambda_i}{\sum_{i=1}^N \lambda_i}$.
    
    It is straightforward to show that $\sigma_i^2 = \lambda_i$. Because $\forall i \quad \sigma_i \leq 1$, we have that $\sigma_i \geq \lambda_i$. 
    This implies that $\lambda \preccurlyeq \sigma$. Therefore, $S_1(\sigma) \leq S_1{(\lambda)} \implies \textrm{effective rank}(Z) \leq \exp{S_1{(Z)}}$.

\end{proof}

\begin{proposition}\textbf{(Random Unit Vectors are Nearly Orthogonal)}
Suppose we have $m$ unit vectors in $\R^D$, that are distributed according to the uniform distribution on the hyper-sphere. Then with probability at least $1-m^2 \sqrt{2\pi}  e^{\frac{-D\epsilon^2}{2}}$, we have that for any pair $i,j$, $i\not=j$,
\[
	|\langle \mathbf{v_i}, \mathbf{v_j} \rangle|\leq \epsilon.
\]
\end{proposition}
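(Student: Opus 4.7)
The plan is to establish the claim via a two-step argument: a single-pair tail bound followed by a union bound over all $\binom{m}{2}$ unordered pairs of indices. The heart of the work lies in controlling the inner product of two independent uniform unit vectors; once that is done, the conclusion follows by standard counting.

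\textbf{Step 1: reduction by rotational invariance.} For any fixed pair $i \neq j$, I would condition on $\mathbf{v_i}$ and apply a rotation mapping $\mathbf{v_i}$ to the first standard basis vector $e_1$. Since $\mathbf{v_j}$ is independent of $\mathbf{v_i}$ and uniform on the sphere, its distribution is invariant under this rotation. Consequently $\langle \mathbf{v_i}, \mathbf{v_j}\rangle$ has the same law as $(\mathbf{v_j})_1$, the first coordinate of a single uniform unit vector on the sphere in $\R^n$.

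\textbf{Step 2: tail bound for one marginal.} I would then prove that if $X$ denotes the first coordinate of a uniform vector on the sphere in $\R^n$, then $P(|X| \geq \epsilon) \leq 2\sqrt{2\pi}\,e^{-n\epsilon^2/2}$. Two natural routes are available. The first uses the explicit density $f(t) \propto (1-t^2)^{(n-3)/2}$ on $[-1,1]$, applies the elementary inequality $(1-t^2)^{(n-3)/2} \leq e^{-(n-3)t^2/2}$, and compares the resulting Gaussian-type tail integral against a careful lower bound for the normalizing constant (which is a Beta function whose asymptotics yield the $\sqrt{2\pi/n}$ scale). The second uses the Gaussian realization $\mathbf{v} = g/\|g\|$ with $g \sim N(0, I_n)$, reducing the question to bounding $P(|g_1| \geq \epsilon \|g\|)$ by combining a standard one-dimensional Gaussian tail estimate with the concentration of $\|g\|$ around $\sqrt{n}$.

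\textbf{Step 3: union bound.} Finally, I would apply the union bound over the $\binom{m}{2} \leq m^2/2$ unordered pairs. Each pair contributes at most $2\sqrt{2\pi}\,e^{-n\epsilon^2/2}$ to the failure probability, so the overall probability that some pair violates $|\langle \mathbf{v_i}, \mathbf{v_j}\rangle| \leq \epsilon$ is at most $m^2 \sqrt{2\pi}\,e^{-n\epsilon^2/2}$, matching the stated bound.

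The hard part will be Step 2: pinning down the constant $\sqrt{2\pi}$ in the marginal tail inequality. A crude Gaussian-style bound of the form $C\,e^{-n\epsilon^2/2}$ with an unspecified constant is easy, but the clean factor $\sqrt{2\pi}$ demands either a tight estimate of the Beta-type normalization in the spherical marginal density, or a careful joint analysis of $g_1$ and $\|g\|$ in the Gaussian realization. By contrast, Steps 1 and 3 are conceptually routine.
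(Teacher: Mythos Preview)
Your approach is essentially the same as the paper's: both use rotational invariance to reduce the pair inner product to the first coordinate of a uniform spherical vector, bound that marginal's tail, and finish with a union bound over pairs. The only real difference is in Step~2: the paper simply cites the spherical-cap concentration inequality $\mathbb{P}(T_\epsilon^c)\le \sqrt{2\pi}\,e^{-n\epsilon^2/2}$ from Wainwright's high-dimensional statistics text rather than deriving it, whereas you propose to establish it from scratch via the Beta marginal density or the Gaussian realization $g/\|g\|$. Your route is more self-contained and arguably cleaner---the paper inserts an unnecessary $\arccos$ detour between the equatorial-band event and the inner-product bound, while you identify $\langle \mathbf{v_i},\mathbf{v_j}\rangle$ directly with the first coordinate---but both arrive at the same place. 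Note also that you prove the two-sided bound $|\langle \mathbf{v_i},\mathbf{v_j}\rangle|\le\epsilon$ and compensate with $\binom{m}{2}\le m^2/2$, while the paper uses a one-sided bound and sums over ordered pairs; the arithmetic matches.
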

\begin{proof}
    We can begin by defining the central $\epsilon$-band around a slice of the hypersphere $\mathbb S_{D-1}$ as,
    \[
    T_\epsilon = \{ z \in \mathbb S_{D-1} : |\langle z,e_1\rangle| \leq \epsilon / 2\},
    \]
    where $e_1$ denotes the first basis vector. 
    The probability of a uniformly distributed vector on the unit sphere not landing in $T_\epsilon \subset \mathbb S_{D-1}$ can be bounded as,
	\[
	\mathbb P(T_\epsilon) \geq 1- \sqrt{2\pi}  e^{\frac{-D\epsilon^2}{2}}.
	\]
    Now, treating $\mathbf{v_i}$ as $e_1$, the basis vector, without loss of generality, we have that, when $\mathbf{v_i}, \mathbf{v_j}$ are uniformly distributed on the hyper-sphere,
    \[
    \mathbb P(|\langle \mathbf{v_i} , \mathbf{v_j} \rangle | < \epsilon) \leq \sqrt{2\pi}  e^{\frac{-D\epsilon^2}{2}}
    \]
    Now, by the union bound on each $i\not=j$, we get that,
	\begin{align*}
		\mathbb{P}(\exists i,j : \langle |\mathbf{v_i}, \mathbf{v_j} \rangle|>\epsilon) &\leq \sum_{i\not= j}\mathbb{P}(|\langle \mathbf{v_i}, \mathbf{v_j} \rangle|>\epsilon)\\ &\leq m^2 \sqrt{2\pi}  e^{\frac{-D\epsilon^2}{2}}.    
	\end{align*}
    So then with probability at least $1-m^2 \sqrt{2\pi}  e^{\frac{-D\epsilon^2}{2}}$, we have that, for any pair $i,j$,
	\[
	| \langle \mathbf{v_i}, \mathbf{v_j} \rangle | \leq \epsilon.
	\]
\end{proof}

\begin{theorem}
	(\textbf{Maximum Prompt Entropy implies Large Dataset Entropy.)}
	Suppose we have a orthogonally equivarient representation model $Z$ such that for all sequences $Z_i = Z(X_i)$ the prompt entropy is maximal and the rows are unit. Suppose also that the data distribution $\mathbf{Data}$ is a isotropic unit Gaussian. Suppose we draw sequences of length $L = D$ from the data distribution. Then with probability $1-N^2 \sqrt{2\pi}  e^{\frac{-D\epsilon^2}{2N^2}}$ over draw of $\{\mathbf{x_i}\}_{i=1}^N \sim \mathbf{Data}$, we have that,
	\[
	|e^{-S_2(QQ^\top)} - \frac N{L^2} | \leq \epsilon
	\]
\end{theorem}
\begin{proof}
	First note that, since the prompt entropy is maximal for each sample $ i $, which we denote $Z_i = Z(X_i)$, then the matrix $K_Z = Z_iZ_i^\top$ is full rank. Since by assumption each row of $Z_i$ has unit rows, then we know that $\|Z_i\|_F^2 = L = \sum_{k=1}^L \sigma_k^2$. In particular we also know that $\sigma_l = \sigma_j$ for all pairs $l,j$ by the assumption that the prompt entropy is maximized. In particular we then know that $Z_iZ_i^\top$ is a orthogonal matrix, and the rows of $Z_i$ form an orthonormal set. We can then write, for some $O_i$ a rotation matrix, that,
	\[
	\mathbf{q_i} = \frac1L \sum_{i=1}^L\mathbf{z_i}	 = \frac1L O_i \mathbf1.
	\]
	We will denote the average over sequences of length $L$, across all $N$ samples, by the dataset matrix $\bar Z = (\mathbf q_1, \mathbf q_2, \ldots \mathbf q_N)^\top$. Since by assumption our model $Z(\cdot)$ is orthogonally equivariant, and the $\textbf{Data}$ distribution is radially symmetric, it follows that these $\{ \mathbf{q_i} \}_{i=1}^N$ are random points on the hypersphere of radius $\frac{1}{\sqrt{L}}$. This means that the matrix $\sqrt{D}\bar Z$ consists of rows that are uniform points on hypersphere of radius $1$. Now notice that,
	\begin{align*}
	    	\|\bar Z\bar Z^\top \|_F^2 &= \frac1{L^2}\| L \bar Z\bar Z^\top \|_F^2\\ &= \frac{1}{L^2} (\sum_{i=1}^N \|\sqrt L q_i\|^2 + \sum_{i\not= j}\langle \sqrt L q_i, \sqrt L q_j \rangle ).
	\end{align*}
	Since $\sqrt L q_i$ is a unit vector this will simplify to,
	\[
	\|\bar Z\bar Z^\top \|_F^2  = \frac{1}{L^2} (N + \sum_{i\not= j}\langle \sqrt L q_i, \sqrt L q_j \rangle ).
	\]
	Now notice that by proposition, we have that with probability at least $1-N^2 \sqrt{2\pi}  e^{\frac{-D\epsilon^2}{2N^2}}$,
	\[
	\forall i\not=j : \langle \mathbf{v_i}, \mathbf{v_j} \rangle \leq \frac \epsilon N.
	\]
	The union bound then tells us that,
	\[
	\mathbb P(\forall i\not= j : |\langle \sqrt D q_i, \sqrt D q_j \rangle| \leq \frac{\epsilon}{N^2}) \geq 1-N^2 \sqrt{2\pi}  e^{\frac{-D\epsilon^2}{2N^2}}.
	\]
	So then with probability at least $1-N^2 \sqrt{2\pi}  e^{\frac{-D\epsilon^2}{2N^2}}$ over the draw of the data points, we have that,
	\[
	\ | \|\bar Z\bar Z^\top \|_F^2  \ - \ \frac{N}{L^2}  | \leq \epsilon.
	\]
	So then since,
	\[
	S_2(\bar Z\bar Z^\top) = \log\left(\frac1{\|\bar Z\bar Z^\top\|_F^2}\right),
	\]
	we have that, $e^{-S_2(\bar Z\bar Z^\top)} = \|\bar Z\bar Z^\top\|_F^2$. In particular,
	\[
	|e^{-S_2(\bar Z\bar Z^\top)} - \frac N{L^2} | \leq\epsilon.
	\]
	Which completes the proof. 
\end{proof}

\begin{theorem}
\textbf{(Minimal Prompt Entropy Implies Small Dataset Entropy.)}
	Suppose we have a orthogonally equivariant representation model $Z$ such that for all sequences $Z_i = Z(X_i)$ the prompt entropy is minimal and the rows are unit. Suppose also that the data distribution $\mathbf{Data}$ is a isotropic unit Gaussian. Suppose we draw sequences from the data distribution. Then with probability $1-N^2 \sqrt{2\pi}  e^{\frac{-D^3\epsilon^2}{2N^8}}$ over the draw of $\{\mathbf{x_i}\}_{i=1}^N \sim \mathbf{Data}$, we have that,
	\[
	|e^{-S_2(\bar Z\bar Z^\top)} - \frac{N^3}{L^2} | \leq \epsilon.
	\]
\end{theorem}

\begin{proof}
	Since the prompt entropy is minimal for each sample, we know that each $Z(X_i)$ will be a rank one matrix, so we can write it as the outer product. In particular, we can write $Z(X_i) = \mathbf{v_i}{\mathbf{u_i}}^\top $. However, since the rows of $Z(X_i)$ are of unit length, we know that all the rows are identical, so we may write without loss of generality, $Z(X_i) = \mathbf{v_i}\mathbf{1}^\top$. Then, it follows that, 
	\[
		\mathbf{q_i} = \frac1L \sum_{i=j}^L \mathbf{z_j^i} = \frac NL \mathbf{v}_i.
	\]
	We will write the dataset average matrix as before as $\bar Z = (\mathbf q_1, \mathbf q_2, \ldots \mathbf q_N)^\top$. In particular the matrix $\frac DN \bar Z$ has rows that are all unit vectors, and these are randomly distributed uniformly on the hyper-sphere. Now notice that,
	\begin{align*}
		\|\bar Z\bar Z^\top\|_F^2 &= \sum_{i=1}^N \| \mathbf q_i\|^2 + \sum_{i\not= j}\langle \mathbf q_i,\mathbf q_j \rangle \\
		&= \sum_{i=1}^N \frac{N^2}{L^2}\|\mathbf v_i\|^2 + \sum_{i\not= j}\frac{N^2}{L^2}\langle\mathbf v_i, \mathbf v_j \rangle \\
		&= \frac{N^3}{L^2} + \sum_{i\not= j}\frac{N^2}{L^2}\langle \mathbf v_i, \mathbf v_j \rangle.
	\end{align*}
	Now by the prior proposition, with probability at least $1-N^2 \sqrt{2\pi}  e^{\frac{-D^3\epsilon^2}{2N^8}}$, we know that, for all $i \not = j$, 
	\[
		|\langle \mathbf{v_i}, \mathbf{v_j} \rangle| \leq \frac{\epsilon L^2}{N^4}.
	\]
	So then we have that,
    \[
    |\|\bar Z\bar Z^\top\|_F^2  - \frac{N^3}{L^2}| \leq \sum_{i\not= j}\frac{N^2}{L^2}\langle |\mathbf v_i, \mathbf v_j \rangle| \leq \frac1{L^2}\sum_{i\not= j}\epsilon \leq \epsilon. 
    \]
	In particular, 
	\[
	|e^{-S_2(\bar Z\bar Z^\top)} - \frac{N^3}{L^2} | \leq \epsilon.
	\]
\end{proof}

\begin{theorem} \textbf{(Dataset Entropy Bounds InfoNCE)}
	Let $X\sim \textbf{Data}$ be a discrete random variable distributed according to the data distribution. Let $X \to Z$ be the Markovian relation between $X$ and the representation $Z$. Then, the InfoNCE loss on $N$ samples from $\textbf{Data}$ satisfies,
	\[
	\log(N) - \text{InfoNCE} \leq I(X; Z) \leq H(Z).
	\] 
	The entropy $H(Z)$ is analogous to the Dataset Entropy. 
\end{theorem}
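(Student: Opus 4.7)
The plan is to split the chain into its two halves and treat them independently. The upper bound $I(X;Z) \leq H(Z)$ is immediate from the identity $I(X;Z) = H(Z) - H(Z \mid X)$ together with non-negativity of discrete conditional entropy $H(Z \mid X) \geq 0$; the interpretation of $H(Z)$ as a dataset-level matrix-based entropy follows from the identification discussed in Section~\ref{subsec:matrix-entropy}, so no further work is needed on that side.

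The substantive half is the lower bound $\log(N) - \text{InfoNCE} \leq I(X;Z)$, which is the classical van den Oord MI bound. I would begin by writing the loss explicitly for a minibatch of $N$ i.i.d.\ samples $\{(x_i,z_i)\}_{i=1}^N$ drawn from the joint $p(x,z)$, in the form
\[
\text{InfoNCE} \;=\; -\mathbb{E}\!\left[\log \frac{f(x_1,z_1)}{\sum_{j=1}^{N} f(x_1,z_j)}\right],
\]
and then substitute the Bayes-optimal critic $f^{\ast}(x,z) = p(z \mid x)/p(z)$, which is the unique minimizer of the NCE objective by a standard variational argument. Regrouping the resulting expression gives
\[
\text{InfoNCE}^{\ast} \;=\; \mathbb{E}\!\left[\log\!\left(1 + \tfrac{p(z_1)}{p(z_1 \mid x_1)} \sum_{j=2}^{N} \tfrac{p(z_j \mid x_1)}{p(z_j)}\right)\right].
\]
Using that each negative $z_j$ is drawn independently from the marginal $p(z)$, so that $\mathbb{E}_{p(z_j)}\!\bigl[p(z_j \mid x_1)/p(z_j)\bigr]=1$, Jensen's inequality (together with monotonicity of $\log$) collapses the inner sum to roughly $N-1$ in expectation and yields $\text{InfoNCE}^{\ast} \geq \log N - I(X;Z)$; any suboptimal critic only increases the loss, so the bound persists. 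Rearranging gives the desired lower bound, and chaining with $I(X;Z)\leq H(Z)$ closes the proof.

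The main obstacle is the technical care around the Jensen step: the inequality must be applied in the right direction to obtain exactly $\log N$ (rather than $\log(N-1)$) as the leading constant, and one has to be careful that the expectation over negatives is taken outside, not inside, the logarithm. A clean route is to follow the derivation in \citet{oord2018representation} (and its refinement by Poole et al.) and simply port their argument; alternatively, one can use the inequality $\log(1+x) \leq \log(2\max(1,x))$ to split cases, but this introduces a vanishing additive slack that does not affect the asymptotic statement. Finally, a short remark should justify why $H(Z)$ in the abstract information-theoretic sense aligns with the matrix-based dataset entropy used elsewhere in the paper, completing the ``analogous to the Dataset Entropy'' claim.
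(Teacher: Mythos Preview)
Your proposal is correct and matches the paper's approach exactly: the paper proves the upper half via the identity $I(X;Z)=H(Z)-H(Z\mid X)\le H(Z)$, and for the lower half it simply cites \citet{oord2018representation} without reproducing the derivation. Your write-up is therefore more detailed than necessary---the paper treats $\log(N)-\text{InfoNCE}\le I(X;Z)$ as a black-box result from the literature, so the Jensen/optimal-critic computation you sketch can be replaced by a one-line citation.
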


\begin{proof}
	The first inequality follows as a simple result from \cite{oord2018representation}. Then, use that,
	\[
	I(X; Z) = H(Z) - H(Z|X) \leq H(Z).
	\]
\end{proof}

\section{Additional Plots \& Visualizations}

\begin{figure}[!ht]
  \centering
 \includegraphics[width=\linewidth]{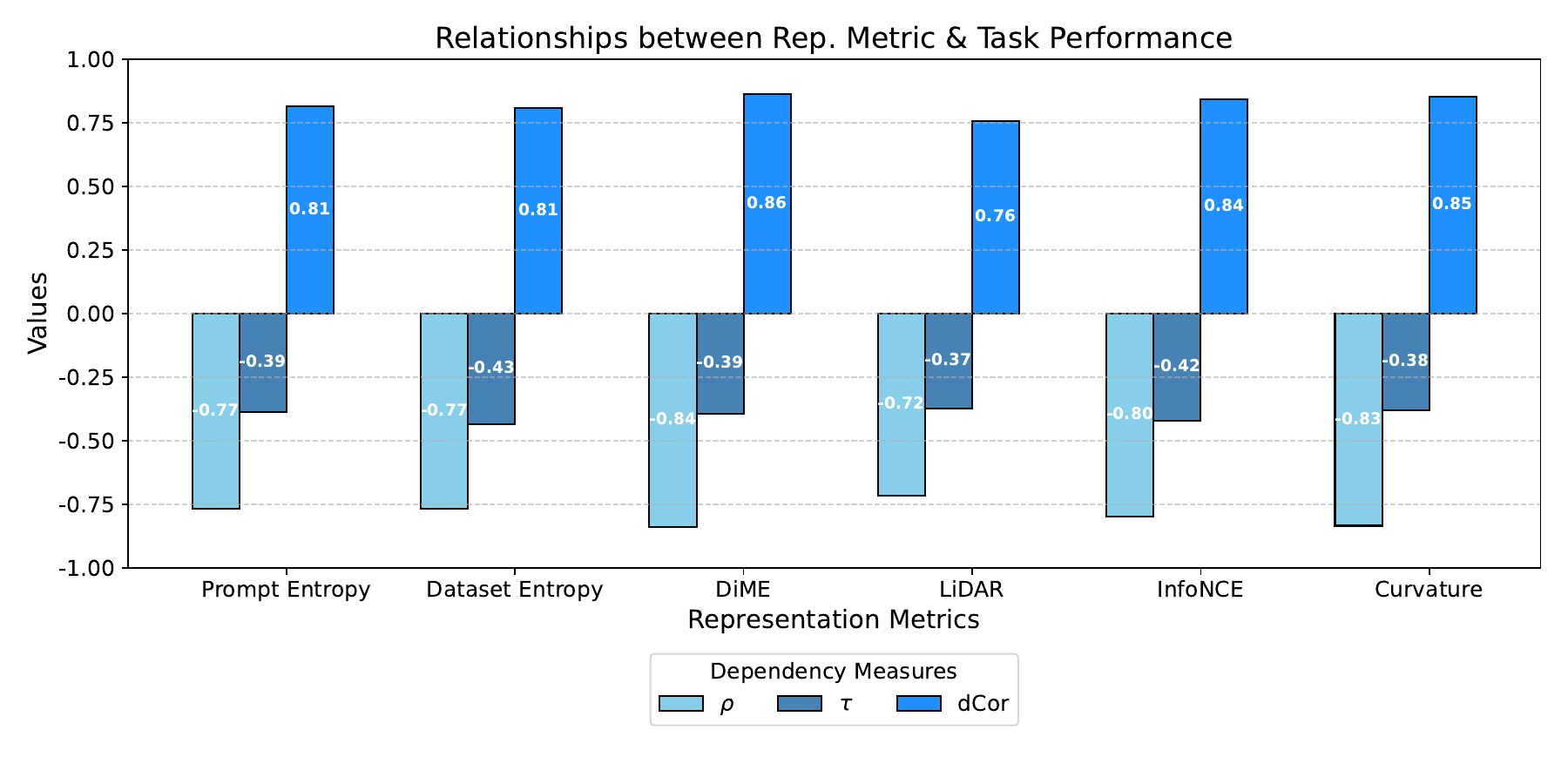}
  \caption{\textbf{Relationships between representation metrics and task performance averaged across layers for Pythia 410M.} Using a variety of linear and non-linear measures---Spearman's $\rho$, Kendall's $\tau$, and distance correlation (dCor)---we see strong inversely associative relationships with the exception of InfoNCE which shows a positive, but still strong associativity. Ranges of $\rho, \tau \in [-1, 1]$ and dCor $\in [0,1]$ with 0 indicating independence and 1 indicating strong dependency.}
  \label{fig:corr_repmetric_perf}
\end{figure}

\begin{figure*}[ht!]
  \centering
 \includegraphics[width=\linewidth]{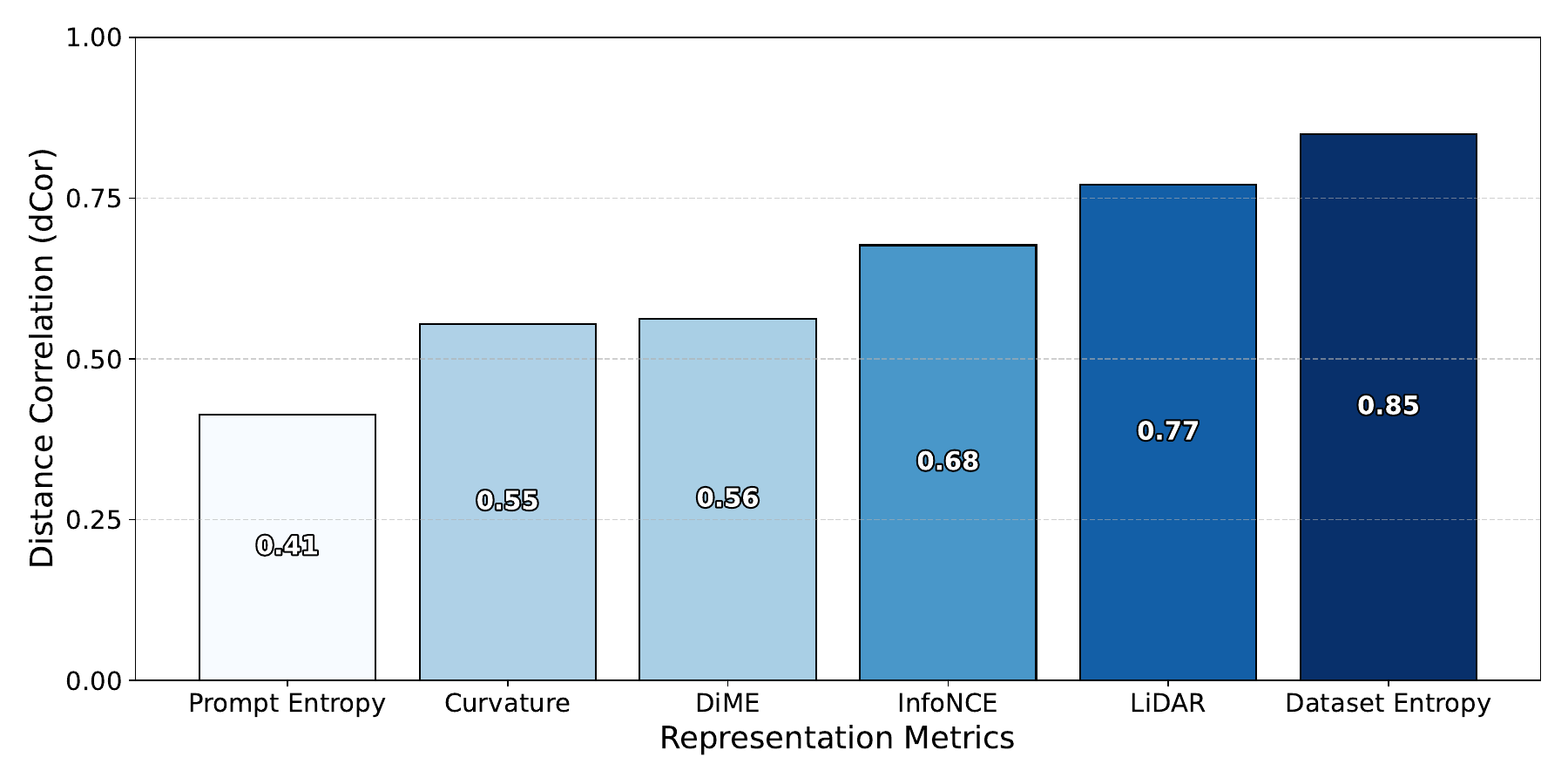}
  \caption{\textbf{Relationship between representation metrics and task performance averaged across layers for BERT.} Using distance correlation (dCor), we see strong associative relationships across the board with LiDAR and dataset entropy exhibiting the strongest relationship with downstream performance. We use dcor due to its robustness and ability to measure both linear and non-linear relationships (dCor $\in [0,1]$ with 0 indicating statistical independence and 1 indicating strong dependency). Other correlative measures also indicate moderate to strong relationships.}
  \label{fig:bert_corr_repmetric_perf}
\end{figure*}

\begin{figure*}[!t]
    \centering
    \begin{subfigure}[b]{0.28\textwidth}
        \centering
        \includegraphics[width=\textwidth]{figures/llm_model_comparisons/metrics_comparison_pythia_mamba_llama_prompt-entropy.pdf}
        \caption{Prompt Entropy}
    \end{subfigure}%
    \hspace{0.04\textwidth}
    \begin{subfigure}[b]{0.28\textwidth}
        \centering
        \includegraphics[width=\textwidth]{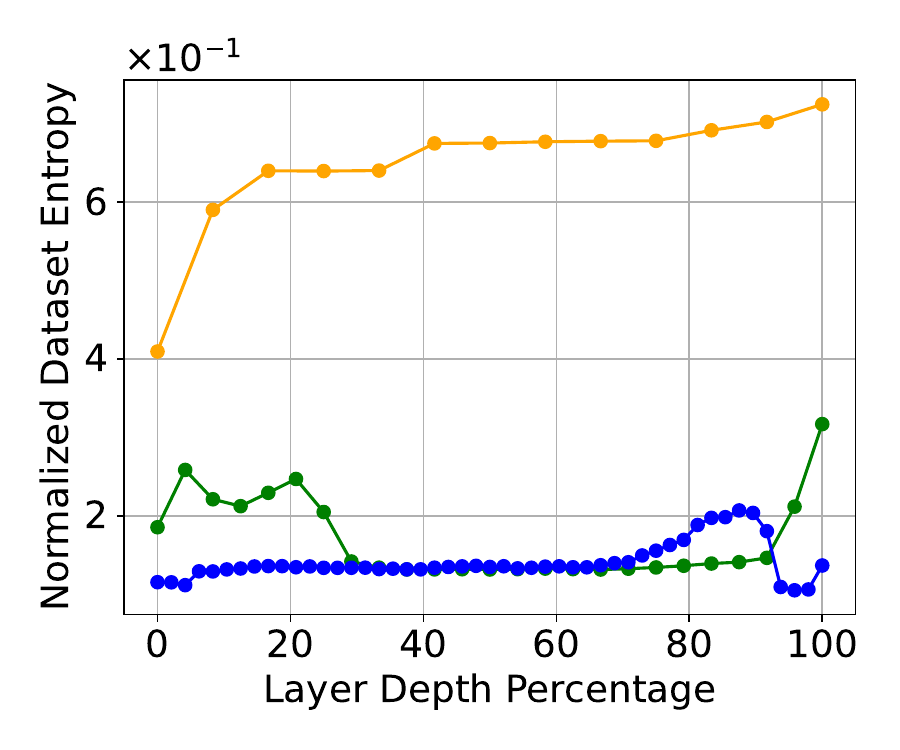}
        \caption{Dataset Entropy}
    \end{subfigure}%
    \hspace{0.04\textwidth}
    \begin{subfigure}[b]{0.28\textwidth}
        \centering
        \includegraphics[width=\textwidth]{figures/llm_model_comparisons/metrics_comparison_pythia_mamba_llama_curvature.pdf}
        \caption{Curvature}
    \end{subfigure}
    
    \vspace{0.5cm} 
    \begin{subfigure}[b]{0.28\textwidth}
        \centering
        \includegraphics[width=\textwidth]{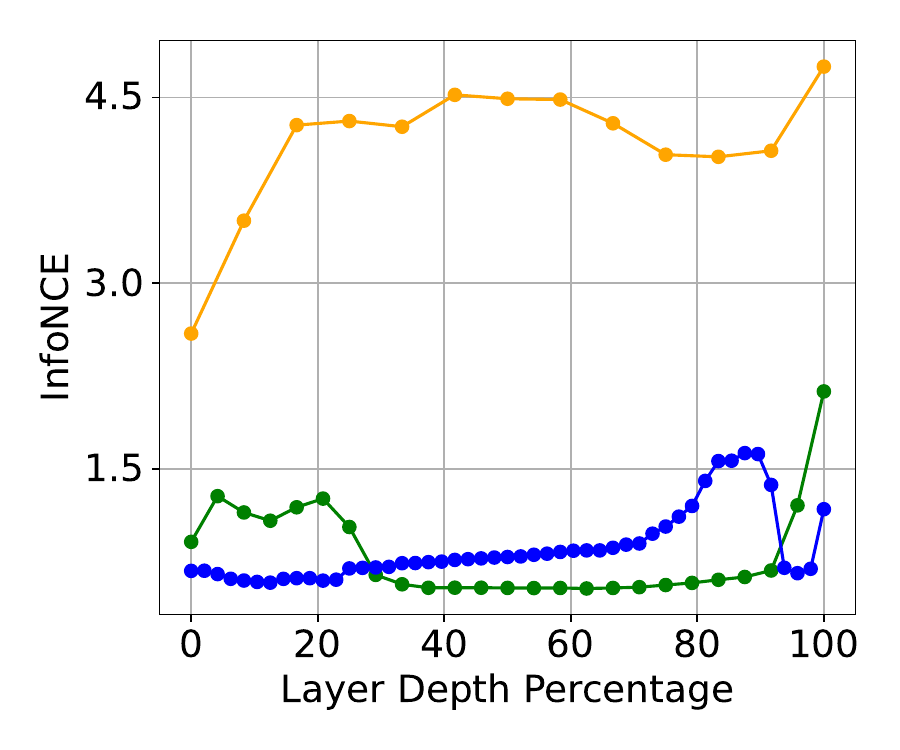}
        \caption{infoNCE}
    \end{subfigure}%
    \hspace{0.04\textwidth}
    \begin{subfigure}[b]{0.28\textwidth}
        \centering
        \includegraphics[width=\textwidth]{figures/llm_model_comparisons/metrics_comparison_pythia_mamba_llama_lidar.pdf}
        \caption{LiDAR}
    \end{subfigure}%
    \hspace{0.04\textwidth}
    \begin{subfigure}[b]{0.28\textwidth}
        \centering
        \includegraphics[width=\textwidth]{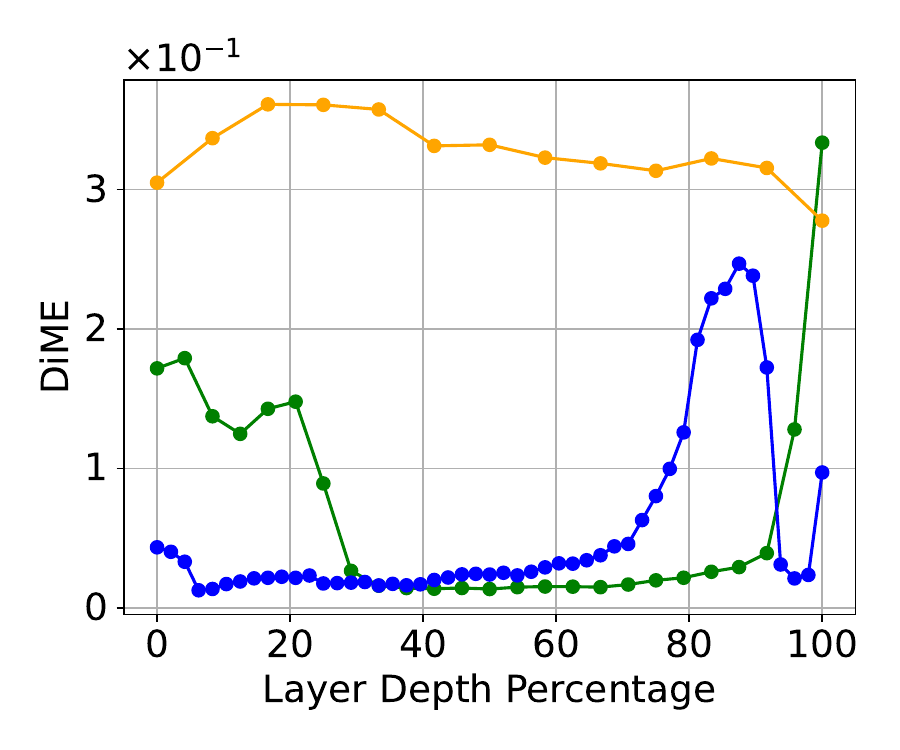}
        \caption{DiME}
    \end{subfigure}
  \caption{textbf{Pythia’s intermediate layers show pronounced changes in representation quality metrics, while Mamba’s remain more stable.} Representation evaluation metrics across layers in Pythia 410M and Mamba 370M architectures. The x-axis denotes model depth as a percentage, allowing fair comparison between models with different layer counts.}
  \label{fig:full-metrics-across-architectures}
\end{figure*}

\begin{figure*}[!t]
  \centering
  \includegraphics[width=0.8\linewidth]{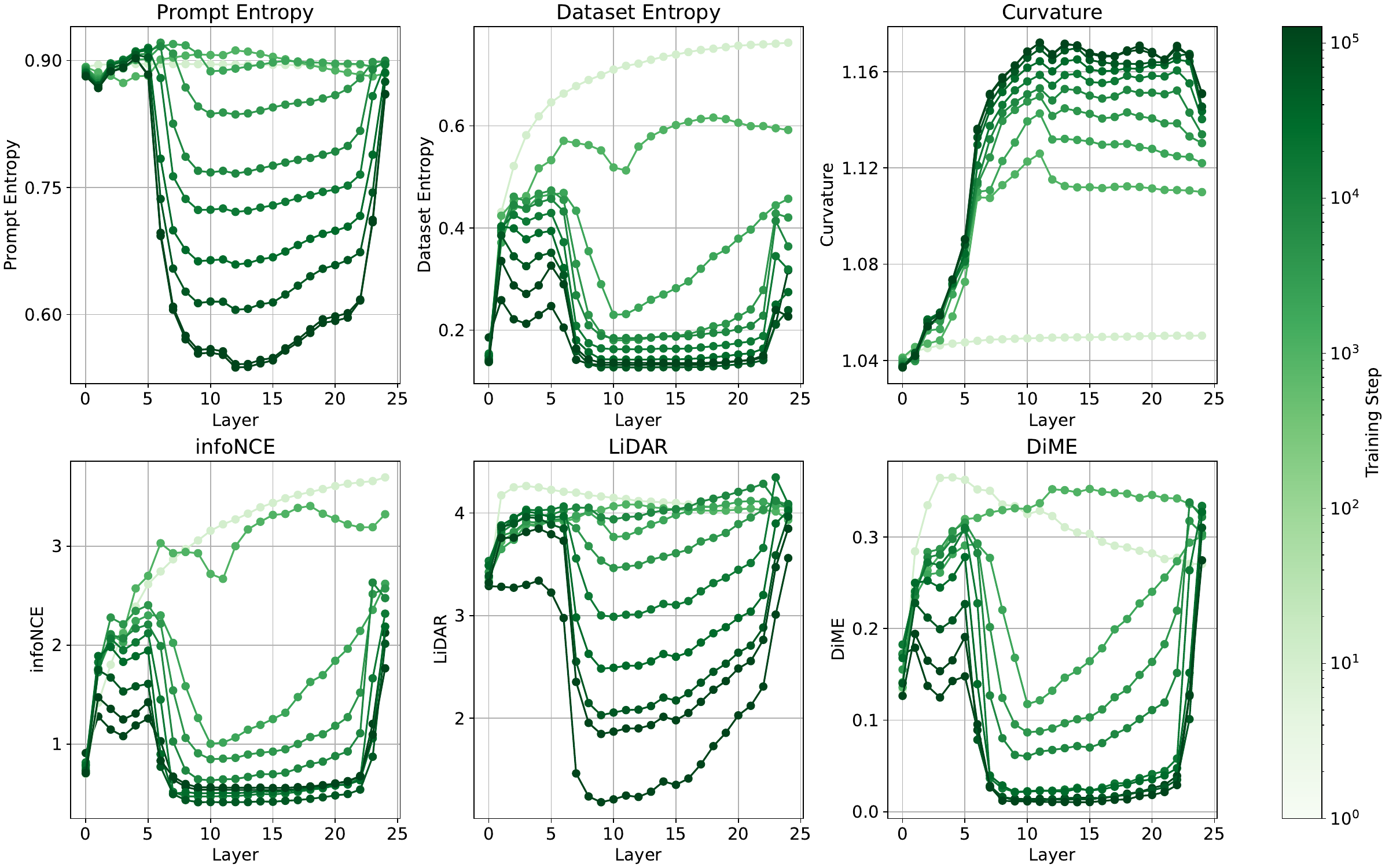}
  \caption{Representation evaluation metrics across layers at various training checkpoints, ranging from step 1 to the final step at 143k. The x-axis represents the depth percentage of the model, showing how training affects different layers, particularly in the intermediate stages.}
  \label{fig:full-metrics_across_training}
\end{figure*}

\begin{figure*}[!t]
    \centering
    \begin{subfigure}[b]{0.30\textwidth}
        \centering
        \includegraphics[width=\textwidth]{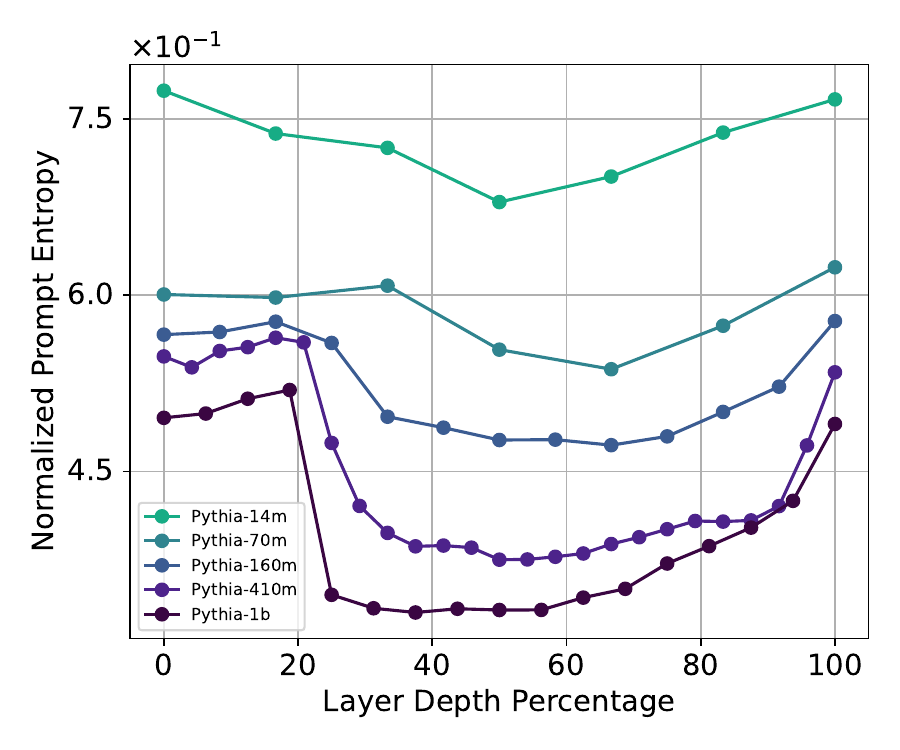}
        \caption{Prompt Entropy}
    \end{subfigure}%
    \hspace{0.02\textwidth}
    \begin{subfigure}[b]{0.30\textwidth}
        \centering
        \includegraphics[width=\textwidth]{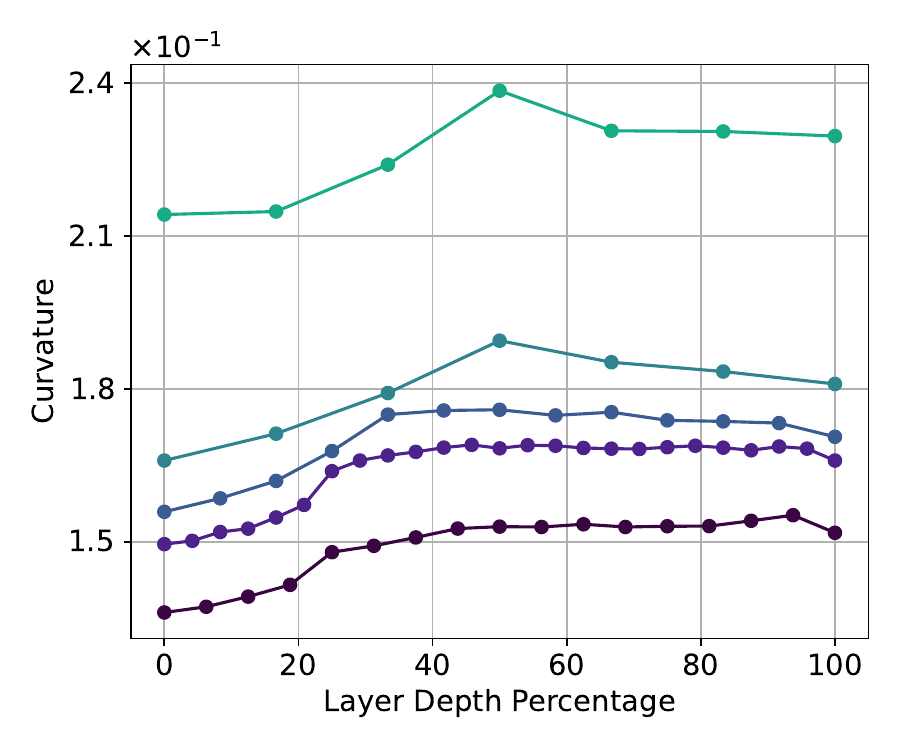}
        \caption{Curvature}
    \end{subfigure}%
    \hspace{0.02\textwidth}
    \begin{subfigure}[b]{0.30\textwidth}
        \centering
        \includegraphics[width=\textwidth]{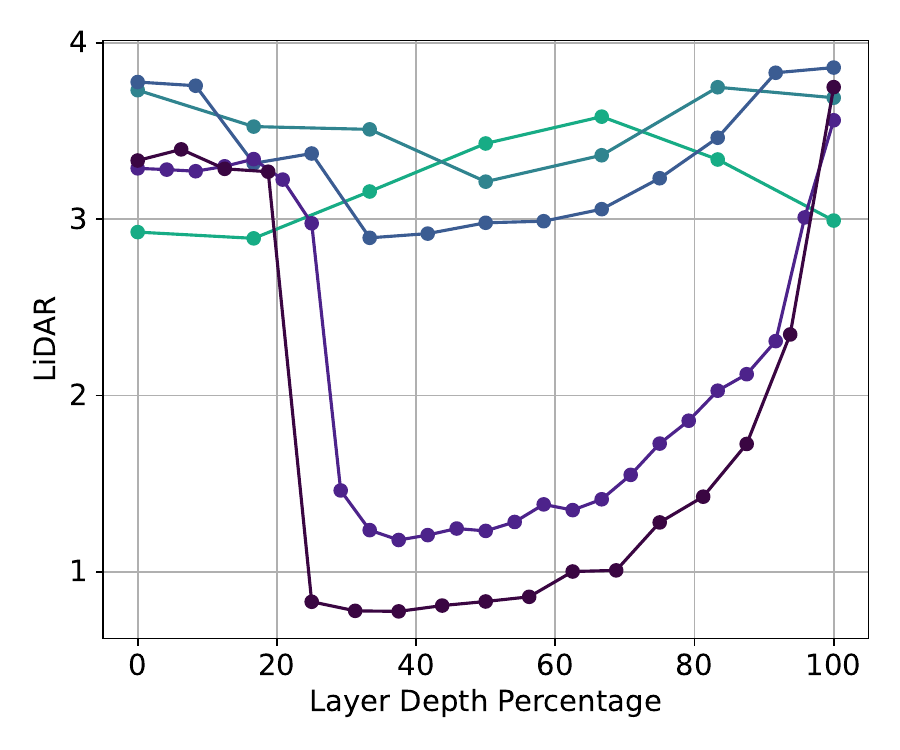}
        \caption{LiDAR}
    \end{subfigure}
  \caption{\textbf{Pythia and Mamba's intermediate layers show pronounced changes in representation quality metrics, while BERT’s remain more stable.} Three representation evaluation metrics calculated on the wikitext dataset for every  layer in Pythia-410M, Mamba 370M, and BERT-base architectures. The x-axis denotes layer depth as a percentage, allowing fair comparison between models with different layer counts.}
  \label{fig:metrics-across-scale}
\end{figure*}

\begin{figure*}[!t]
    \centering
    \begin{subfigure}[b]{0.40\textwidth}
        \centering
        \includegraphics[width=\textwidth]{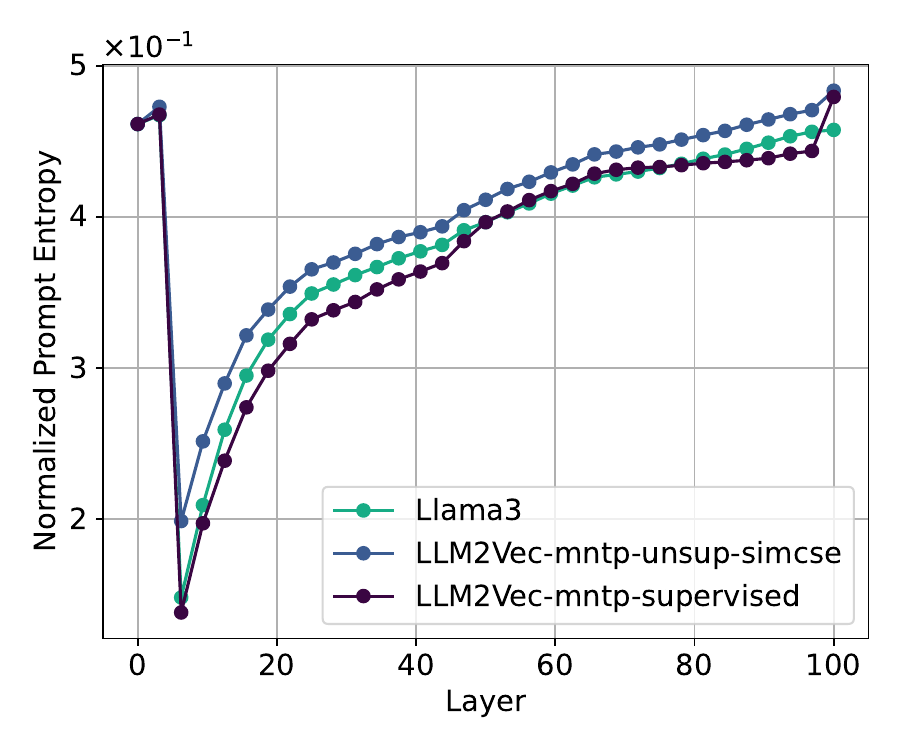}
        \caption{Prompt Entropy}
    \end{subfigure}%
    \hspace{0.02\textwidth}
    \begin{subfigure}[b]{0.40\textwidth}
        \centering
        \includegraphics[width=\textwidth]{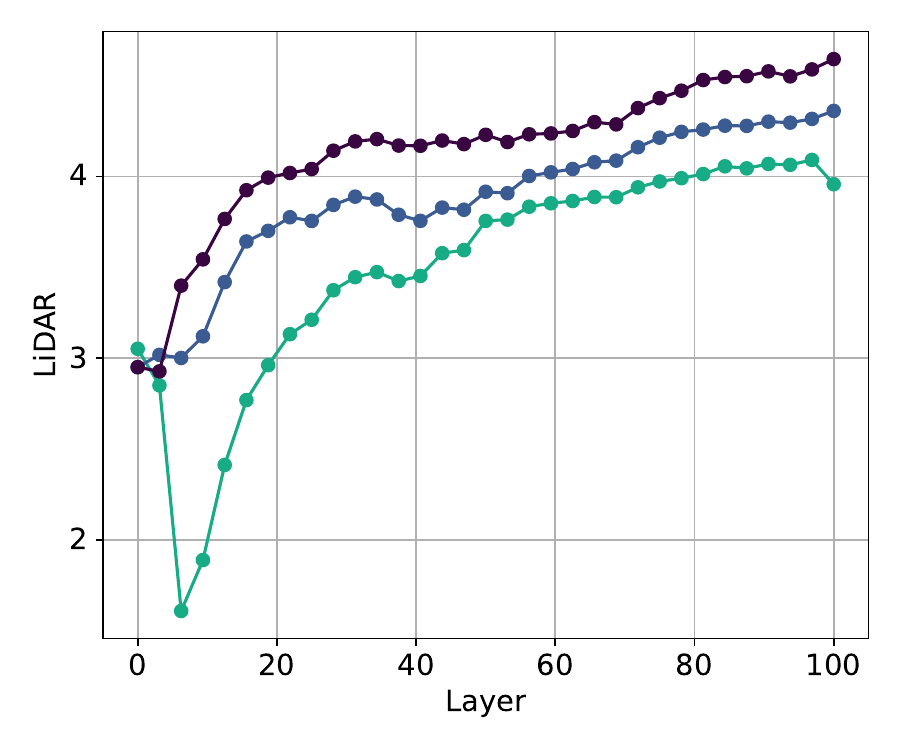}
        \caption{Curvature}
    \end{subfigure}%
  \caption{\textbf{Finetuning affects the internal behavior of LLMs.} Representation evaluation metrics across layers for Llama3 and two finetuned versions of Llama3.}
  \label{fig:metrics-across-finetuning}
\end{figure*}

\begin{figure*}[!t]
    \centering
    \begin{subfigure}[b]{0.4\textwidth}
        \centering
        \includegraphics[width=\textwidth]{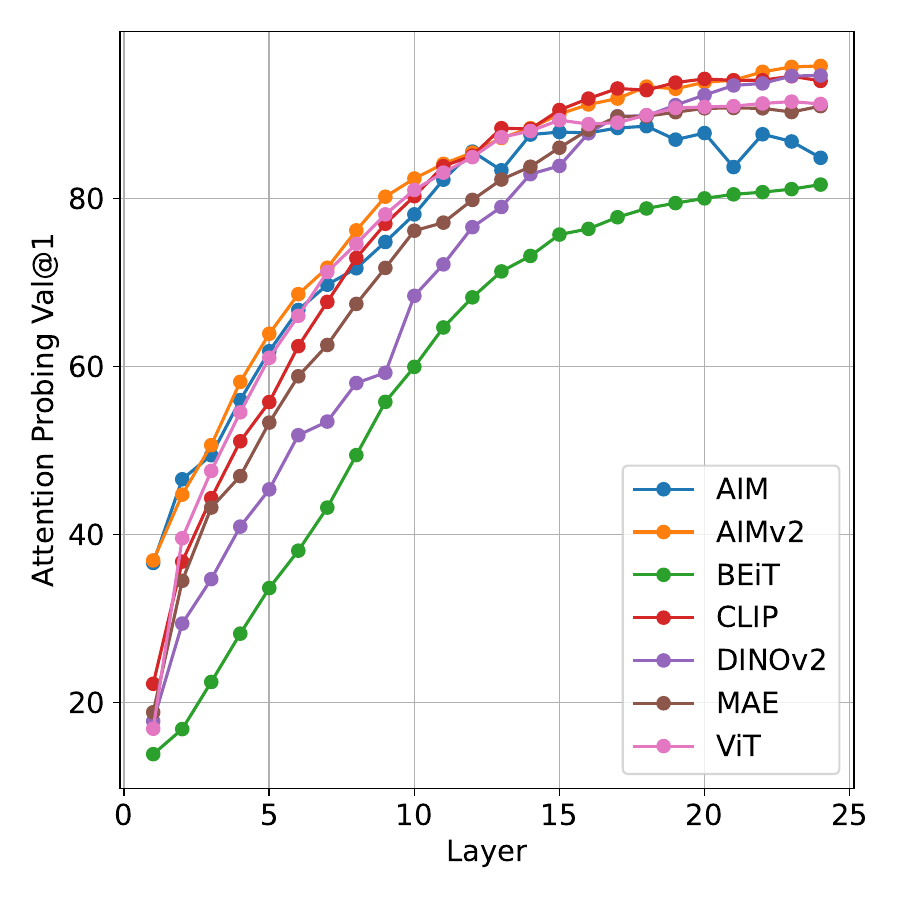}
        \caption{Validation Top-1 Accuracy}
    \end{subfigure}%
    \hspace{0.04\textwidth}
    \begin{subfigure}[b]{0.4\textwidth}
        \centering
        \includegraphics[width=\textwidth]{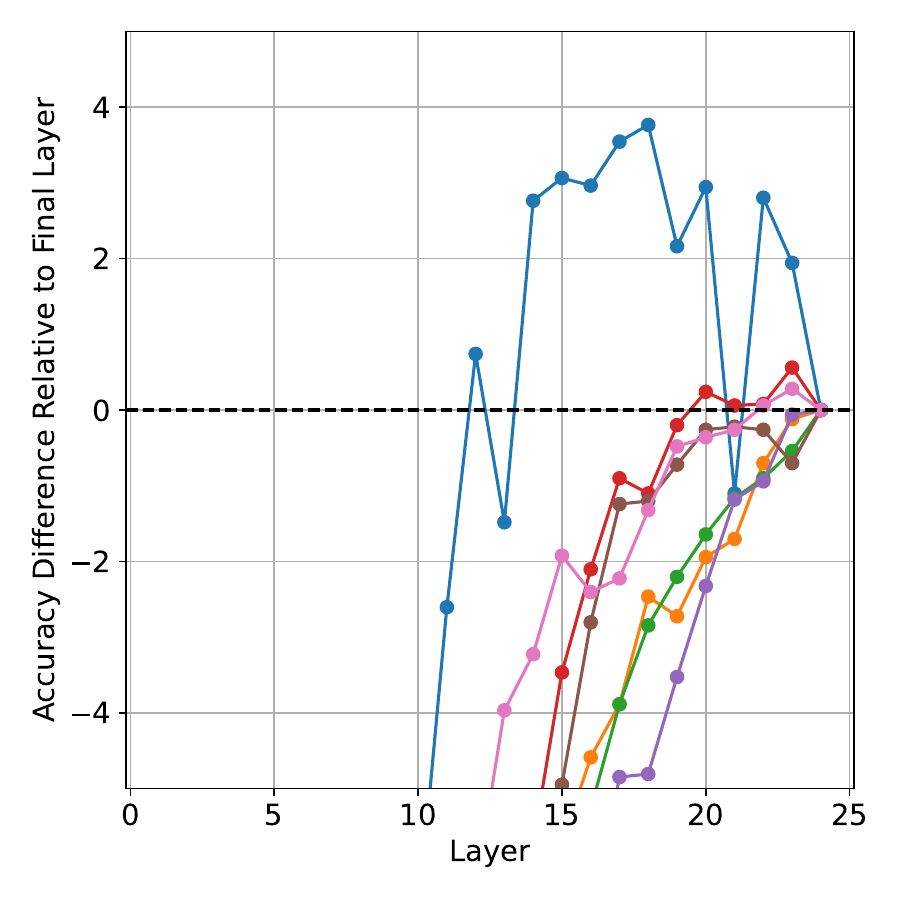}
        \caption{Accuracy Difference Relative to Last Layer}
    \end{subfigure}%
    \vspace{0.5cm} 
    \begin{subfigure}[b]{0.4\textwidth}
        \centering
        \includegraphics[width=\textwidth]{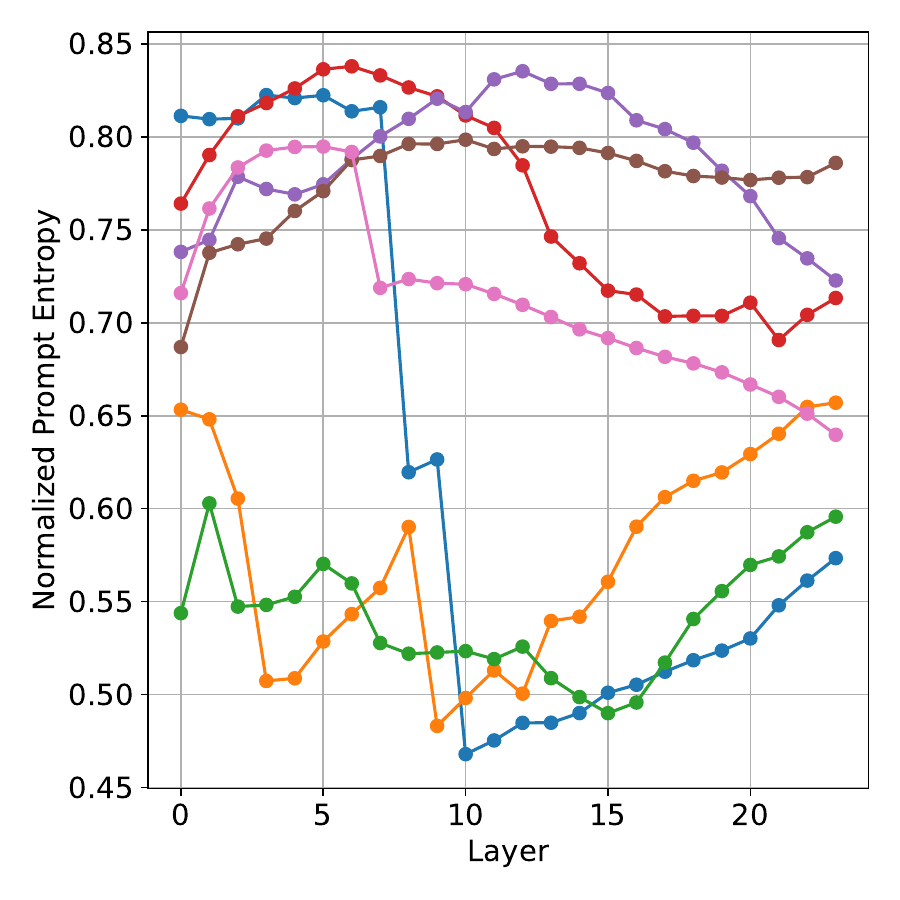}
        \caption{Prompt Entropy}
    \end{subfigure}
    \hspace{0.04\textwidth}
    \begin{subfigure}[b]{0.4\textwidth}
        \centering
        \includegraphics[width=\textwidth]{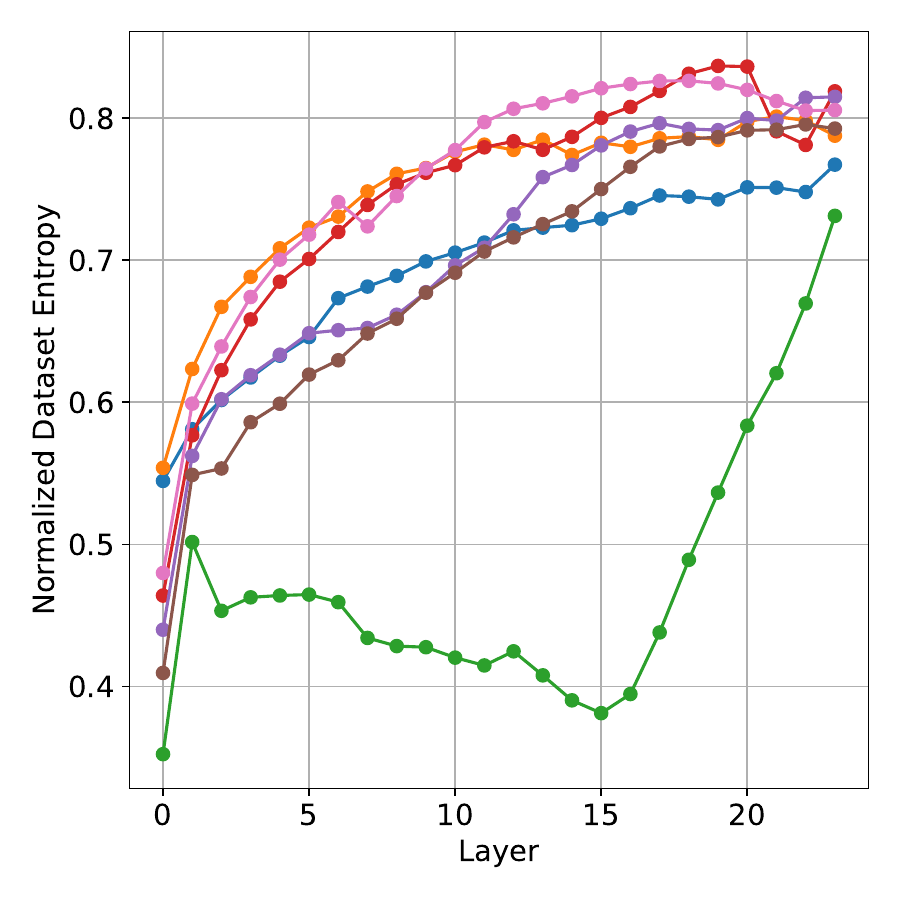}
        \caption{Dataset Entropy}
    \end{subfigure}%
  \caption{\textbf{Comparison of vision models trained on different pretext tasks}. The dataset is ImageNet-100~\cite{cmc-imagenet100} and all models use the same 24-layer ViT-L architecture. The validation accuracy is calculated using attention probing on tokens from a frozen backbone layer, following the work of~\citet{aim}}
  \label{fig:vision-model-performances}
\end{figure*}

\begin{figure*}[!t]
    \centering
    \includegraphics[width=0.9\linewidth]{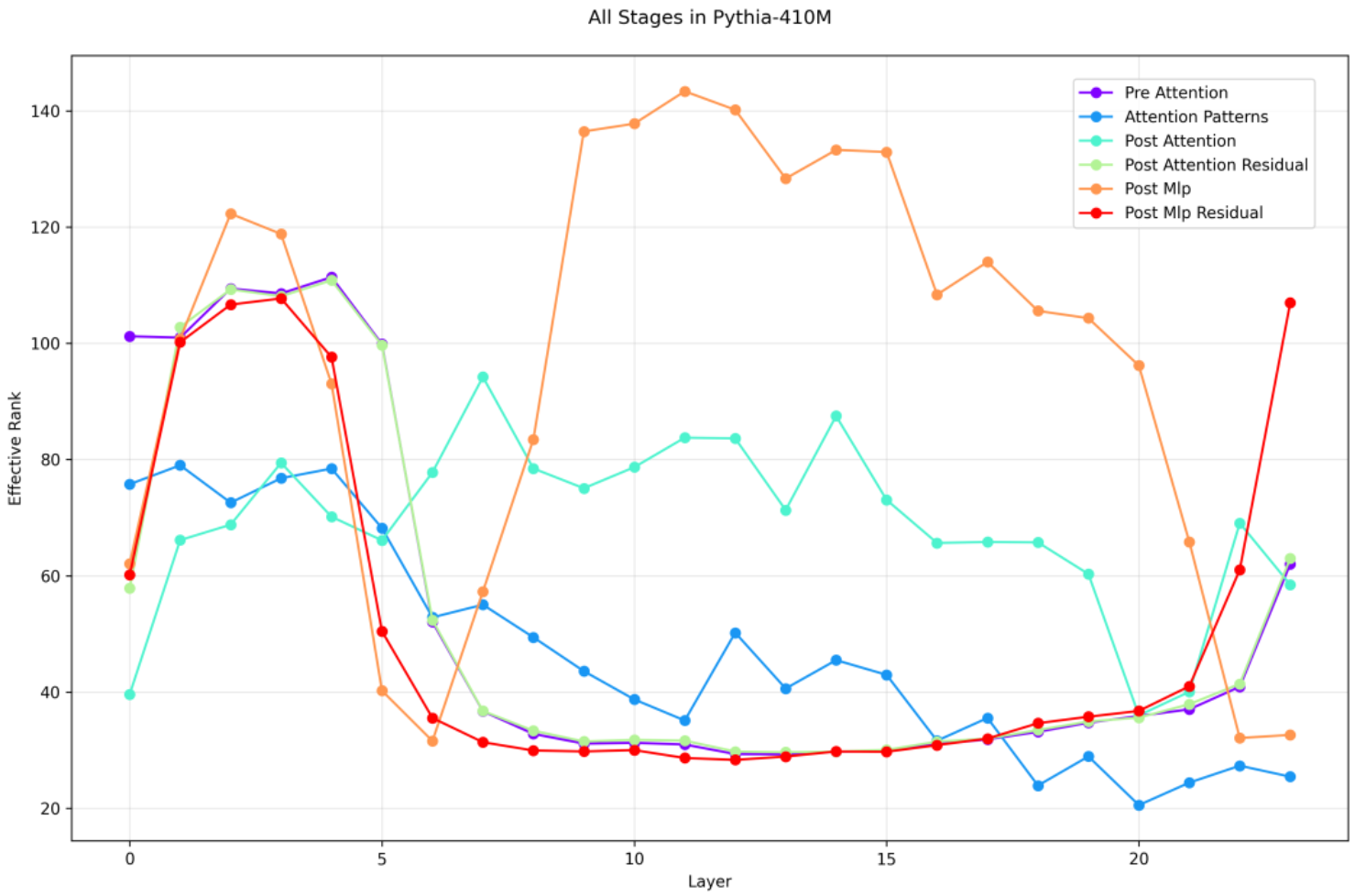}
    \caption{Behavior of effective rank at different stages within a transformer block.}
    \label{fig:pythia-stages}
\end{figure*}
\end{document}